\definecolor{_fbteal3}{HTML}{e6f5f0}
\newcommand{\norm}[1]{\left\lVert#1\right\rVert}
\newcommand{\defeq}{\coloneqq}
\newcommand{\R}{\mathbb{R}}
\newcommand{\putalg}{\textsc{paws}\xspace}
\theoremstyle{definition}
\newtheorem{proposition}{Proposition}
\newtheorem{assumption}{Assumption}
\begin{document}

\title{\Large\bf
    Semi-Supervised Learning of Visual Features by Non-Parametrically\\Predicting View Assignments with Support Samples}
\author{\large
    Mahmoud Assran$^{1,3,4}$\quad Mathilde Caron$^{1,2}$\quad Ishan Misra$^{1}$\quad Piotr Bojanowski$^{1}$\quad Armand Joulin$^{1}$\\[2mm]
    \large $^*$Nicolas Ballas$^{1}$\quad $^*$Michael Rabbat$^{1,3}$\\[4mm]
    \normalsize
    $^{1}$Facebook AI Research\quad\quad $^{2}$Inria Univ.~Grenoble Alpes\quad\quad $^{3}$Mila -- Quebec AI Institute\quad\quad $^{4}$McGill University\\
    \normalsize\tt
    \{massran, mathilde, imisra, bojanowski, ajoulin, ballasn, mikerabbat\}@fb.com
}
\date{}
\maketitle

\begin{abstract}
This paper proposes a novel method of learning by \underline{p}redicting view \underline{a}ssignments \underline{w}ith \underline{s}upport samples (\putalg).
The method trains a model to minimize a consistency loss, which ensures that different views of the same unlabeled instance are assigned similar pseudo-labels.
The pseudo-labels are generated non-parametrically, by comparing the representations of the image views to those of a set of randomly sampled labeled images.
The distance between the view representations and labeled representations is used to provide a weighting over class labels, which we interpret as a soft pseudo-label.
By non-parametrically incorporating labeled samples in this way, \putalg extends the distance-metric loss used in self-supervised methods such as BYOL and SwAV to the semi-supervised setting.
Despite the simplicity of the approach, \putalg outperforms other semi-supervised methods across architectures, setting a new state-of-the-art for a ResNet-50 on ImageNet trained with either $10\%$ or $1\%$ of the labels, reaching $75.5\%$ and $66.5\%$ top-1 respectively.
\putalg requires $4\times$ to $12\times$ less training than the previous best methods.
\let\thefootnote\relax\footnotetext{$^*$Co-last author}
\let\thefootnote\relax\footnotetext{\ \ Code: \href{https://github.com/facebookresearch/suncet}{github.com/facebookresearch/suncet}}
\end{abstract}

\section{Introduction}
\begin{figure}[t]
    \centering
    \includegraphics[width=0.95\linewidth]{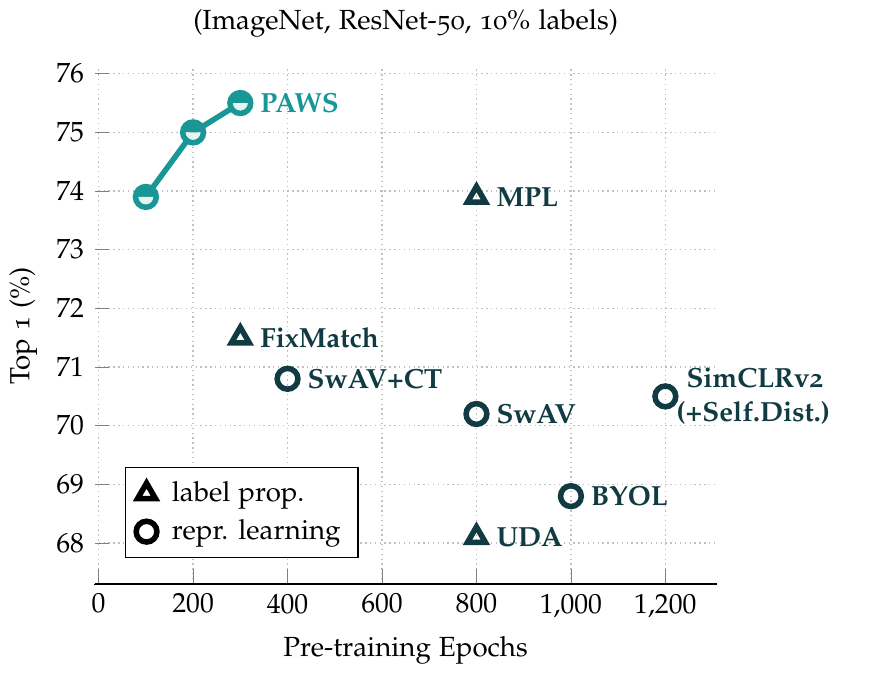}
    \caption{Training a ResNet-50 on ImageNet when only 10\% of the training set is labeled. The figure shows top-1 validation accuracy as a function of the number of training epochs. The proposed method, \putalg, achieves higher accuracy than previous work while requiring significantly fewer training epochs. Concretely, 100 epochs of \putalg training takes less than 8.5 hours using 64 \textsc{nvidia} V100-16G GPUs.}
    \label{fig:rn50_top1_10per}
\end{figure}
Learning with less labeled data has been a longstanding challenge of computer vision and machine learning research.
One popular approach for learning with few labels is to first perform unsupervised pre-training on a large dataset followed by supervised fine-tuning on the small set of available labels.
Self-supervised methods generally adhere to this paradigm (e.g., see~\cite{chen2020big} for an analysis in the context of semi-supervised learning), and they have demonstrated competitive performance on semi-supervised learning benchmarks across a wide range of self-supervised pre-training strategies~\cite{chen2020simple, chen2020big, caron2020unsupervised, grill2020bootstrap}.
However, the self-supervised paradigm also requires substantially more computational effort than other approaches and does not make use of labeled data when it is available.

An alternative line of work suggests to use available labeled data to generate pseudo-labels for the unlabeled data, and then train a model using the labeled and pseudo-labeled data~\cite{zoph2020rethinking,pham2020meta,xie2020self,tarvainen2017mean,berthelot2019mixmatch,berthelot2019remixmatch,sohn2020fixmatch}.
This begs the question, can we get the best of both worlds, leveraging labeled data throughout training while also building on advances in self-supervised learning?

This paper proposes a novel method of learning by \underline{p}redicting view \underline{a}ssignments \underline{w}ith \underline{s}upport samples (\putalg).
The method trains a model to minimize a consistency loss, which ensures that different views of the same unlabeled instance are assigned similar pseudo-labels.
The pseudo-labels are generated non-parametrically, by comparing the representations of the image views to those of a set of randomly sampled labeled images.
The distance between the view representations and labeled representations is used to provide a weighting over class labels, which we interpret as a soft pseudo-label.
By non-parametrically incorporating labeled samples in this way, \putalg extends the distance-metric loss in self-supervised methods such as BYOL~\cite{grill2020bootstrap} and SwAV~\cite{caron2020unsupervised} to the semi-supervised setting.

Despite the simplicity of the approach, \putalg outperforms other semi-supervised methods across architectures, setting a new state-of-the-art for a ResNet-50 trained on ImageNet with either $10\%$ or $1\%$ of the training instances labeled, achieving $75\%$ and $66\%$ top-1 respectively. 
Moreover, this is achieved with only 200 epochs of training, which is $4\times$ less than that of the previous best method.
The same conclusion holds when training with wider ResNet architectures as well (i.e., ResNet-50 $2\times$ or $4\times$). 

\section{Related Work}

\paragraph{Semi-supervised learning.}
One procedure to simultaneously learn with both labeled and unlabeled data is to combine a supervised loss on the labeled samples with an unsupervised loss on the unlabeled samples.
For example,~\cite{grandvalet2006entropy, miyato2018virtual, verma2019interpolation} train a model by adding an unsupervised regularization term to a supervised cross-entropy loss.
Similarly, UDA~\cite{xie2019unsupervised} adds a supervised cross-entropy loss to an appropriately weighted unsupervised regularization term.
Likewise, S4L~\cite{zhai2019s4l} adds a supervised cross-entropy loss to a weighted mixture of self-supervised pretext loss terms.
This idea of an adding a supervised cross-entropy loss to an unsupervised instance-based loss has also been exploited to learn representations suitable for both image classification and instance recognition~\cite{berman2019multigrain}.

There is also a family of semi-supervised methods related to self-training~\cite{zoph2020rethinking} that explicitly generate pseudo-labels for the unlabeled samples and that optimize prediction accuracy on both the ground truth labels (for the labeled samples) and the pseudo-labels (for the unlabeled samples).
For example Pseudo-Label~\cite{lee2013pseudo} and earlier related methods~\cite{scudder1965probability,yarowsky1995unsupervised,riloff1996automatically} first train a model on the labeled samples, use this model to assign pseudo-labels to unlabeled samples, and then re-train the model using both the labeled and unlabeled samples.
The MixMatch trilogy of work~\cite{berthelot2019remixmatch,berthelot2019mixmatch,sohn2020fixmatch} operates similarly, but generates the pseudo-labels in an online fashion.
Specifically, FixMatch~\cite{sohn2020fixmatch} trains with a supervised cross-entropy loss while simultaneously making predictions on weakly augmented unlabeled images.
When the unsupervised predictions are confident enough, they are used as pseudo-labels for strongly augmented views of those same unlabeled images.

Another closely related line of work in self-training uses an explicit teacher-student configuration.
For example, Mean Teacher~\cite{tarvainen2017mean} and Noisy Student~\cite{xie2020self} use a teacher network to assign pseudo-labels to unlabeled samples, which are then used to train a student network.
Similarly, MPL~\cite{pham2020meta} uses a teacher network to pseudo-label unlabeled images for a student network. The student then performs an update by minimizing its prediction error with respect to the teacher's pseudo-label.
Subsequently, the student is evaluated on a mini-batch of labeled samples, and the teacher network is updated using a meta-learning loss based on the student's evaluation performance.
In MPL, the overall teacher update consists of the combination of the student's meta-learning loss plus a separate UDA loss.
After self-training, the MPL student model is subsequently fine-tuned on the labeled samples using a standard cross-entropy loss.

There is also the Co-training framework~\cite{blum1998combining} which bears a coarse resemblance to the self-training procedure, but posseses notable differences.
Specifically, Co-training learns a separate feature extractor on each (conditionally independent) view of the data, combines the predictions of the different feature extractors, and alternates between pseudo-labeling a subset of the data and training on the generated pseudo-labels.

\paragraph{Few-shot learning.}
In few-shot classification, a network must be adapted to learn to recognize new classes when given only a few labeled examples of these classes~\cite{vinyals2016matching,snell2017prototypical,ravi2016optimization, lake2017building}.
One common approach, which is adopted by Matching Networks~\cite{vinyals2016matching} and Prototypical Networks~\cite{snell2017prototypical}, is to learn a metric space to embed the data. A differentiable nearest-neighbour classifier is then  used in this space to predict the class of a query point given some labeled data-points in the support set~\cite{vinyals2016matching,snell2017prototypical}.
Although there are few-shot approaches that learn entirely from unsupervised data~\cite{hsu2018unsupervised}, the majority train using labeled data, which is in contrast to the self-supervised approaches discussed next.

\paragraph{Self-supervised learning.} Major advances have been made in learning useful image representations from unlabeled data. Some methods take the approach of incorporating domain-specific knowledge in the form of specific pre-training tasks, such as solving jigsaws~\cite{misra2020self}. More recent success has been achieved by contrasting multiple views of an image~\cite{chen2020simple,he2019moco,chen2020mocov2}, where the views come from different random augmentations. Such methods aim to learn a mapping from images to a representation space such that different views of the same image have similar representations. Various approaches have been proposed to avoid the trivial solution of collapsing all images to the same point, including contrasting negative samples~\cite{chen2020simple} and using Sinkhorn-Knopp normalization~\cite{asano2019self, caron2020unsupervised}.

It has been demonstrated that self-supervised pre-training produces image representations that can be leveraged effectively for semi-supervised learning~\cite{chen2020big}. Contrastive self-supervised pre-training generally benefits from training with very large batch sizes, containing sufficiently many positive and negative examples, and consequently is very computationally expensive, e.g., requiring between 800--1000 epochs of pre-training to learn state-of-the-art representations on ImageNet.
Some recent works have demonstrated that the batch-size requirements can be reduced at the expense of maintaining an additional memory bank~\cite{chen2020exploring, he2019moco,chen2020mocov2,grill2020bootstrap,caron2020unsupervised}.
Further performance benefits have been obtained by distilling very large pre-trained teacher models to smaller student models~\cite{chen2020big}. In contrast, \putalg only trains with positive examples, and leverages available annotated data during pre-training to significantly reduce the amount of pre-training required.

\section{Methodology}
\label{sec:methodology}
We consider a large dataset of unlabeled images $\mathcal{D}=(\mathbf{x}_i)_{i\in [1, N]}$ and a small support dataset of annotated images  $\mathcal{S}=({\mathbf{x}_{s}}_i, y_i)_{i\in [1, M]}$, with $M \ll N$.\footnote{Note that the images in the support set $\mathcal{S}$ may overlap with the images in the dataset $\mathcal{D}$.} Our goal is to learn image representations by leveraging both $\mathcal{D}$ and  $\mathcal{S}$ during pre-training.
After pre-training with $\mathcal{D}$ and $\mathcal{S}$, we fine-tune the learned representations using only the labeled set $\mathcal{S}$.

\subsection{High-level Description}
A schematic of the high-level pre-training approach is shown in Figure~\ref{fig:method}.
Given an image $\mathbf{x}_i$ from $\mathcal{D}$, we use a random set of data augmentations to generate two views, an anchor view $\mathbf{\hat{x}}_i$, and an associated positive view $\mathbf{\hat{x}}^+_i$.
Learning proceeds by non-parametrically assigning soft pseudo-labels to the anchor and positive view and subsequently minimizing the cross-entropy $H(\cdot, \cdot)$ between them.
\begin{figure}[t]
    \centering
    {\centering
        \includegraphics[width=\linewidth]{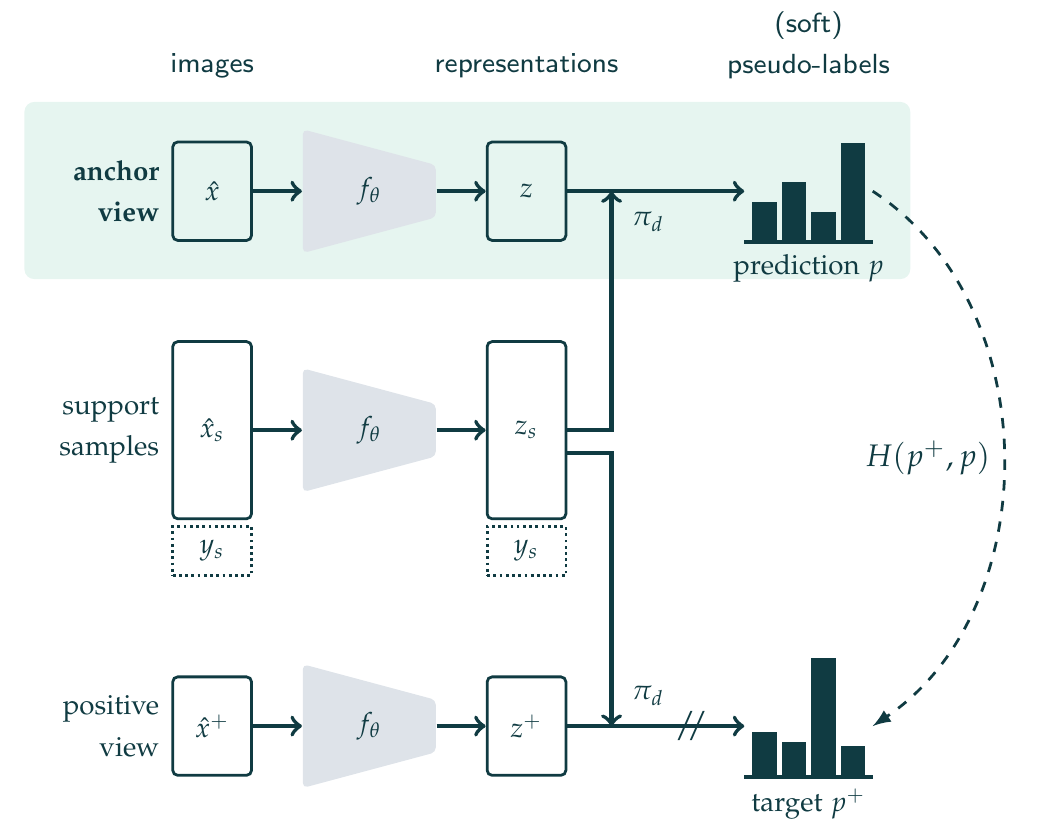}
    }\caption{{\putalg.} The method assigns soft pseudo-labels to an anchor view of an image and an associated positive view, and subsequently minimizes the cross-entropy $H$ between them. The soft pseudo-labels are generated using a differentiable similarity classifier $\pi_d$ that measures the similarity to a mini-batch of labeled support samples, and outputs a soft class distribution.
    Positive views are created using data-augmentations of the anchor view. Since the trivial collapse of all representations to a single vector would lead to high-entropy predictions by the similarity classifier, sharpening the target pseudo-labels is sufficient to eliminate all trivial solutions.}
    \label{fig:method}
\end{figure}

The soft pseudo-labels are generated using a differentiable similarity-based classifier $\pi_d$ that measures the similarity of a given representation to those of a mini-batch of labeled samples from the support set $\mathcal{S}$, and outputs a (soft) class label. We use a simple Soft Nearest Neighbours strategy~\cite{salakhutdinov2007learning} for the similarity classifier $\pi_d$.

\paragraph{Connection to few-shot learning.} The mini-batch of labeled samples is obtained by first sampling a subset of classes and then sampling a few instances of each class. This, along with the use of a soft nearest-neighbours strategy is similar to approaches previously used for few-shot classification~\cite{vinyals2016matching}. However, unlike~\cite{vinyals2016matching}, we do not use LSTMs or other mechanisms for encoding or accessing elements of the support set, and furthermore, we never seek to directly predict the labels of elements of the support set. Rather, the support set is only used to assign pseudo-labels to unlabeled image views, and the loss is only evaluated with respect to the pseudo-labels assigned to the unlabeled image views.

\subsection{Detailed Methodology}
Let ${\bf x} \in \R^{n \times (3\times H\times W)}$ denote a mini-batch of $n$ anchor image views, and let ${\bf x}^+ \in \R^{n \times (3\times H\times W)}$ denote the associated $n$ positive image views. Similarly, let ${\bf x}_\mathcal{S} \in \R^{m \times (3\times H\times W)}$ denote a mini-batch of $m$ support images drawn from $\mathcal{S}$ with one-hot class labels ${\bf y}_\mathcal{S} \in \R^{m \times K}$, where $K$ is the number of classes.

\paragraph{Encoder.}
Given a parameterized encoder, denoted by $f_\theta:\R^{3 \times H \times W} \rightarrow \R^d$, let ${\bf z} \in \R^{n \times d}$ and ${\bf z}^+ \in \R^{n \times d}$ denote the representations computed from ${\bf x}$ and ${\bf x}^+$, respectively, and let ${\bf z}_\mathcal{S} \in \R^{m \times d}$ denote the $m$ support representations computed from ${\bf x}_\mathcal{S}$. In our experiments below, the encoder will be the trunk of a deep residual network~\cite{he2016deep}.
The $i^{th}$ representation in the mini-batch $\bf z$ is written as a \emph{row-vector} $z_i \in \R^{1\times d}$, and its associated positive view in the mini-batch is denoted $z^+_i$; i.e., $z_i = f_\theta({\bf x}_i)$ and $z_i^+ = f_\theta({\bf x}_i^+)$.
For a scalar-valued similarity function $d(\cdot, \cdot) \geq 0$, the similarity classifier $\pi_d(\cdot, \cdot)$ is given by
\begin{equation*}
    \pi_d(z_i, {\bf z}_\mathcal{S})
    = \sum _{({z_s}_j, y_j) \in {\bf z_\mathcal{S}}}
    \left(\frac{d(z_i, {z_s}_j)}{\sum_{{z_s}_k \in {\bf z_\mathcal{S}}} d(z_i, {z_s}_k)}\right) y_j
\end{equation*}
where $y_j$ is the one-hot ground truth label vector associated with the $j^{th}$ row vector ${z_s}_j$ from ${\bf z_\mathcal{S}}$.

\paragraph{Similarity metric and predictions.} In this work, we take the similarity metric $d(a, b)$ to be $\exp(\nicefrac{a^T b}{\norm{a}\norm{b}\tau})$, the exponential temperature-scaled cosine. For L2-normalized representations, the similarity classifier $\pi_d(\cdot, \cdot)$ can be concisely written as
\[
    p_i \defeq \pi_d(z_i, {\bf z_\mathcal{S}}) = \sigma_\tau(z_i {\bf z_{\mathcal{S}}^\top}){\bf y_{\mathcal{S}}},
\]
where $\sigma_\tau(\cdot)$ is the softmax with temperature $\tau > 0$, and $p_i \in [0,1]^K$ is the prediction for representation $z_i$.\footnote{Specifically, given a vector $a \in \R^K$, the softmax $\sigma_\tau(a) \in [0,1]^K$ is defined as $[\sigma_\tau(a)]_k \defeq \frac{\exp\left(a_k/\tau\right)}{\sum^K_{j=1}\exp\left(a_j/\tau\right)}$ for $k=1,\ldots,K$.} The positive view predictions $p_i^+$ are calculated similarly from representations $z_i^+$.

To avoid representation collapse, rather than contrast negative samples or incorporate Sinkhorn-Knopp normalization, we compare the prediction of one view with the sharpened prediction of the other view.
We define the sharpening function $\rho(\cdot)$ with temperature $T > 0$ as 
\[
     [\rho(p_i)]_k \defeq \frac{{[p_i]_k}^{\nicefrac{1}{T}}}{\sum^K_{j=1} {[p_i]_j}^{\nicefrac{1}{T}}},
      \qquad k=1,\ldots,K.
\]
Sharpening the targets encourages the network to produce confident predictions.
As will be clear in Section~\ref{sec:theory}, sharpening the targets is provably sufficient to eliminate collapsing solutions in the \putalg framework. Empirically, we have observed that training without sharpening can result in collapsing solutions.

Note that in the case where the support set contains only one instance per sampled class, sharpening the target predictions is equivalent to using a lower temperature in the cosine similarity between the unlabeled representation and support representations.
However, when the sampled support set contains more than one instance per sampled class, then sharpening the target predictions is actually different from adjusting the cosine temperature.
In this case, it is preferable to sharpen the target predictions rather than use a different temperature in the cosine similarity, since changing the cosine temperature can significantly affect the accuracy of the similarity classifier $\pi_d$.

\paragraph{Training objective.} To train the encoder, we penalize when the predictions $p_i$ and $p_i^+$ of two views of the same image are different. As mentioned above, we compare the prediction of one view with the sharpened prediction of the other view; i.e., $H(\rho(p_i), p_i^+) + H(\rho(p_i^+), p_i)$.

We also incorporate a regularization term to encourage the image view representations to utilize the full set of classes represented in the support set.
Let $\overline{p} \defeq \frac{1}{2n}\sum^{n}_{i=1}\big(\rho(p_i) + \rho(p_i^+)\big)$ denote the average of the sharpened predictions across all unlabeled representations. The regularization term, which we refer to as \textit{mean entropy maximization} (\textsc{me-max}), seeks to maximize the entropy of $\overline{p}$, denoted $H(\overline{p})$. That is, while the individual predictions are encouraged to be confident, the average prediction is encouraged to be close to the uniform distribution.
The \textsc{me-max} regularizer has previously been used in the discriminative unsupervised clustering community for balancing learned cluster sizes (see, e.g.,~\cite{joulin2012convex}).

Thus, the overall objective to be minimized when training the parameters $\theta$ of the encoder $f_\theta$ is
\begin{equation}
    \label{eq:objective}
    \frac{1}{2n} \sum^{n}_{i=1} \left(H(\rho(p^+_i), p_i) + H(\rho(p_i), p_i^+)\right) - H(\overline{p}).
\end{equation}
Note that we only differentiate the cross-entropy loss terms with respect to the predictions $p_i$ and $p^+_i$, and not the sharpened targets $\rho(p_i)$ and $\rho(p^+_i)$.

The discussion so far has assumed that we only generate two views for each unlabeled image. One could generate more than two views, in which case we sum the loss over all views and take the target to be the average prediction across the other views of the same image.

The proposed approach seeks to improve on existing self-supervised approaches for semi-supervised learning by: (i) efficiently using available task information, and (ii) addressing representation collapse.
On the first issue, since the similarity classifier is differentiable, we evaluate gradients with respect to the labeled samples, but do not directly optimize prediction accuracy on the ground truth labels to avoid overfitting. 
On the second issue, since the trivial collapse of all representations to a single vector would lead to high-entropy predictions by the similarity classifier, sharpening the target pseudo-labels is sufficient to eliminate all trivial solutions as we will demonstrate in Section~\ref{sec:theory}.

\paragraph{Neural architectures with external memory.}
\putalg can be interpreted as a neural network architecture with an external memory. Typically, in those architectures, a differentiable neural attention mechanism is used to read and access a memory space which contains a set of elements that are relevant to the task at hand.
In \putalg, the support representations ${\bf z_\mathcal{S}}$ of labeled images characterize the external memory of the network, while the non-parameteric classifier $\pi_d$ corresponds to the soft-attention operation that retrieves memory elements given a query $z_i$.
From this perspective, \putalg optimizes an encoder network such that two views of the same image activate the same elements in the memory.
Moreover, by randomly sampling a subset of labeled images to use as the support set at each iteration, \putalg avoids developing a strong dependence on any particular elements in the memory.

\paragraph{Assimilation \& Accommodation.}
\putalg also has connections to Piaget's Constructivist learning theory of \emph{assimilation \& accommodation}~\cite{piaget1964cognitive}, which provided grounding for work in cybernetics~\cite[Chapter VII]{boden1980jean}.\footnote{This connection did not readily carry-over to Artificial Intelligence (AI) in the 70's due to the largely symbolic nature of AI approaches at the time; e.g., it was not obvious how to represent the near infinite variations of a hand-drawn curve in a single concise representation; an issue which is now largely resolved by gradient-based learning and modern neural network architectures.}
At the heart of Constructivism is the idea that every individual possesses representations relating to distinct semantic concepts that are updated through the process of \emph{assimilation and accommodation}.
During assimilation, the mind adapts its representations of new observations to fit its \emph{past} observations, while during accommodation, the representations of \emph{past} observations are updated to account for the new observations (cf.~Appendix~\ref{apndx:historical}).
In the \putalg procedure, backpropagating with respect to the image views can be seen as a process of assimilation, ensuring that new observations (the image views) are consistent with the current schemata (the support representations).
Similarly, backpropagating  with respect to the support samples can be seen as a process of accommodation, ensuring that the current schemata (the support representations) are effective at describing the new observations (the image views).

\section{Theoretical Guarantees}
\label{sec:theory}
Next we show that \putalg is guaranteed to avoid the trivial collapse of representations under the following assumptions.

\begin{assumption}[Class Balanced Sampling]
\label{ass:balanced}
Each mini-batch of labeled support samples contains an equal number of instances from each of the sampled classes.
\end{assumption}
\begin{assumption}[Target Sharpening]
\label{ass:sharp}
The target $p^+$ is sharpened, such that it is not equal to the uniform distribution.
\end{assumption}
\begin{proposition}[Non-Collapsing Representations]
\label{prop:collapse}
Suppose Assumptions~\ref{ass:balanced} and~\ref{ass:sharp} hold.
If $f_\theta$ is such that the representations collapse, i.e., $z_i = z$ for all $z_i \in \mathcal{S}$, then $\norm{\nabla_\theta H(p^+,p)} > 0$.
\end{proposition}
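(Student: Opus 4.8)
The plan is to show that a collapsed encoder is not even a critical point of the single cross-entropy term, by tracking where a nonzero gradient must come from. First I would read off what collapse does to the two view-predictions: since the anchor and positive views of an image pass through the same network, collapse forces $z_i^+ = z_i$, so $p_i^+ = p_i$; writing $p$ for this common prediction and $q$ for the stop-gradient target $\rho(p_i^+) = \rho(p)$, the quantity to study is $H(q,p)$ with $q$ held fixed. Then I would use the two assumptions to guarantee $q \neq p$. The sharpening map satisfies $\rho(p) = p$ exactly when $p$ is constant on its support, so it suffices to rule out a $p$ that is uniform on its support; Assumption~\ref{ass:balanced} is what singles out ``uniform over the sampled classes'' as the only such degenerate value compatible with collapse --- if the similarity logits $z^\top z_{s_j}/\tau$ were all equal the softmax would be uniform over the $m$ support samples, and the equal per-class counts would turn $\sigma_\tau(z\,{\bf z}_{\mathcal S}^\top){\bf y}_{\mathcal S}$ into the uniform distribution over the sampled classes --- while Assumption~\ref{ass:sharp} states precisely that the target $q = \rho(p)$ is \emph{not} that uniform distribution. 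Together these force $p$ to be non-uniform, and in particular the support representations cannot all coincide, so $q \neq p$.

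The next step is a one-line convex-analysis fact: for fixed $q$ the map $p \mapsto H(q,p)$ is strictly convex on the probability simplex with unique minimizer $p = q$, so when $q \neq p$ its gradient $\big(-q_k/p_k\big)_k$ is not a constant vector on $\mathrm{supp}(p)$ and hence has a nonzero component in the tangent space $\{v : \sum_k v_k = 0\}$ of the simplex. Equivalently, some admissible infinitesimal reallocation of the mass of $p$ strictly decreases $H(q,p)$.

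The last step --- which I expect to be the main obstacle --- is to carry this strict sub-optimality back through the chain rule to the parameters: I need a direction $\dot\theta$ along which the induced perturbation of $p = \sigma_\tau(z\,{\bf z}_{\mathcal S}^\top){\bf y}_{\mathcal S}$ is not orthogonal to $\big(-q_k/p_k\big)_k$. Perturbing $\theta$ perturbs both $z$ and the support representations $z_{s_j}$; since the support representations are not all equal, one can choose $\dot\theta$ that raises the similarity logits of the support samples of one sampled class relative to another, moving $p$ in a mass-transferring, tangential direction and so strictly decreasing $H(q,p)$, which yields $\norm{\nabla_\theta H(p^+,p)} > 0$. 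The genuine difficulty is that this requires a mild non-degeneracy of the Jacobian of $f_\theta$ at the collapsed parameter --- a priori every parameter perturbation could shift all similarity logits by the same amount, leaving $p$ fixed --- so I would argue this fails only on a measure-zero set of parameters (or record it as a regularity hypothesis on $f_\theta$) and then conclude.
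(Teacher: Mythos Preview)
Your argument goes off the rails at the sentence ``Together these force $p$ to be non-uniform, and in particular the support representations cannot all coincide.'' The hypothesis of the proposition is precisely that all support representations \emph{do} coincide; you are not free to conclude otherwise. What the paper actually does is the opposite of what you attempt: under collapse, every similarity $d(z,z_{s_j})$ is equal, so the softmax weights are uniform over the $m$ support samples, and class-balanced sampling (Assumption~\ref{ass:balanced}) then forces the anchor prediction $p$ to be exactly the uniform distribution $\frac{1}{K}\mathbf{1}_K$. Assumption~\ref{ass:sharp} is then invoked on the \emph{target} $p^+$, not on $p$: it asserts that the (sharpened) target is \emph{not} uniform. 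Hence $p\neq p^+$ and the cross-entropy gradient is nonzero. Your reading --- using Assumption~\ref{ass:sharp} to infer backwards that $p$ itself must be non-uniform --- produces a contradiction with the collapse hypothesis rather than a usable inequality, and everything you build on ``the support representations are not all equal'' (the convex-analysis tangential direction, the choice of $\dot\theta$ moving one class's logits relative to another's) is then unfounded under the stated hypothesis.

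Two further remarks. First, the tension you implicitly detect is real: if the target is literally $\rho(p^+)$ with the sharpening map of Section~\ref{sec:methodology}, then under collapse $p^+$ is uniform and $\rho$ fixes it, so Assumption~\ref{ass:sharp} would fail; the paper explicitly notes this just after the proof and appeals to a tie-breaking mechanism to keep the assumption in force. The proof simply \emph{takes} Assumption~\ref{ass:sharp} as given and uses $p$ uniform, $p^+$ non-uniform. Second, your caution about the Jacobian --- that $p\neq p^+$ does not by itself give $\nabla_\theta H(p^+,p)\neq 0$ without some surjectivity of $\theta\mapsto p$ --- is well-placed and is in fact more careful than the paper, which passes directly from $p\neq p^+$ to a nonzero parameter gradient without further comment. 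But that careful last step needs to sit on the correct first step: $p$ is uniform, not non-uniform.
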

\begin{proof}
Since $z = z_i$ for all $z_i \in \mathcal{S}$, it holds that $d(z, z_i) = d(z, z_j)$ for all $z_i, z_j \in \mathcal{S}$.
Therefore $p \defeq \pi_d\left(z, \mathcal{S}\right) = \nicefrac{1}{n} \sum_{(z_i, y_i)} y_i$, where $y_i$ is the one-hot class label for the representation $z_i$.
Let $K$ denote the number of classes represented in the mini-batch of support samples.
By Assumption~\ref{ass:balanced}, since the mini-batch of support samples contains an equal number of instances from each sampled class, it follows that there are $\nicefrac{n}{K}$ instances for each of the $K$ represented classes.
Therefore, the prediction $p$ further simplifies to $\frac{1}{n} \left( {\bf{1}_K} \frac{n}{K} \right) = \frac{1}{K} {\bf{1}_K}$, the uniform distribution over the $K$ classes.
However, by Assumption~\ref{ass:sharp}, the targets $p^+$ are sharpened such that they are not equal to the uniform distribution.
Therefore, $p \neq p^+$, from which it follows that $\norm{\nabla H(p^+,p)} > 0$.
\end{proof}
Proposition~\ref{prop:collapse} provides a theoretical guarantee that the proposed method is immune to the trivial collapse of representations.
It is also straightforward to extend Proposition~\ref{prop:collapse} to accommodate certain popular transformations of the labels $y_i$, such as label smoothing.
In short, the underlying principle is that collapsing representations result in high entropy predictions under the non-parametric similarity classifier, but the targets are always low-entropy (because we sharpen them), and so collapsing all representations to a single vector is not a stationary point of the training dynamics.

Note that the sharpening function defined in Section~\ref{sec:methodology} may not always satisfy Assumption~\ref{ass:sharp}, unless one introduces a simple
tie-breaking mechanism.
However, in practice, such a mechanism is not necessary as the targets never become uniform (since we apply sharpening from the start of the training).
There are also alternative strategies to guarantee the non-collapse of representations without making the target-sharpening assumption, such as by directly using the available class labels for prediction or adding an entropy-minimization term; see Appendix~\ref{apndx:theory} for more details.

\section{Implementation Details}
\label{sec:implementation}
We first pre-train a network using \putalg, and then fine-tune the learned representations for the classification task using only the labeled samples. We also report results using the pre-trained representations directly in a nearest-neighbour classifier.

We adopt similar hyper-parameter settings that have previously been reported in the self-supervised literature~\cite{chen2020big, chen2020simple, chen2020exploring, caron2020unsupervised, grill2020bootstrap}.
Specifically, for pre-training, we use the LARS optimizer~\cite{you2017large} with a momentum value of 0.9, weight decay $10^{-6}$, cosine-similarity temperature of $\tau = 0.1$, and batch-size of 4096. We linearly warm-up the learning-rate from 0.3 to 6.4 during the first 10 epochs of pre-training, and decay it following a cosine schedule~\cite{loshchilov2016sgdr} thereafter.

To construct the different image views, we use the multi-crop strategy from SwAV~\cite{caron2020unsupervised}, generating two large crops ($224\times224$), and six small crops ($96\times96$) of each unlabeled image.
Each small crop has two positive views (the two large crops), while each large crop has only one positive view (the other large crop).\footnote{The target for the small crops is the average of the large crop predictions.}
To construct the support mini-batch at each iteration, we also randomly sample 6720 images, comprising 960 classes and 7 images per class, from the labeled set.
For all sampled images (both unlabeled images and support images), we apply the SimCLR data-augmentations~\cite{chen2020simple, chen2020big}, specifically random crop, horizontal flip, color distortion, and Gaussian blur.
For the sampled support images, we also apply label smoothing with a smoothing factor of $0.1$.
Lastly, for the target sharpening, we use a temperature of $T=0.25$.

Following previous self-supervised methods, the encoder $f_\theta$ in our experiments is a ResNet trunk with a 3-layer MLP projection head~\cite{chen2020big, grill2020bootstrap}.
To facilitate comparison with BYOL~\cite{grill2020bootstrap}, we also include a 2-layer MLP prediction head, $g_\zeta$, after $f_\theta$, before computing the anchor predictions.
Specifically, the representations $z$ and ${\bf z_\mathcal{S}}$ are fed into $g_\zeta$ before computing their cosine similarity.
While this prediction head is included in our default setup for consistency with previous work, the ablation experiments below (see Table~\ref{tb:prediction-head}), show that \putalg also works well without it.
Similar to previous self-supervised methods~\cite{chen2020big, chen2020simple, grill2020bootstrap}, we also use global batch normalization during pre-training, and exclude the bias and batch-norm parameters from weight decay and LARS adaptation.

After pre-training, we fine-tune a linear classifier from the first layer of the projection head in the encoder $f_\theta$, and follow the evaluation protocol of BYOL~\cite{grill2020bootstrap}.
Specifically, we simultaneously fine-tune the encoder/classifier weights using the available labeled samples and a standard supervised cross-entropy loss.
See Appendix~\ref{apndx:implementation} for more details, and Section~\ref{sec:ablation} for ablation experiments.

We also report the results of using the pre-trained representations directly in a nearest-neighbour classifier (i.e., without fine-tuning).
Specifically, the nearest-neighbour classifier compares the representations of new query images to those of the available labeled data.
We refer to this approach as \putalg-\textsc{nn}.

\section{Main Results}
\label{sec:results}
\begin{table}[t]
    \footnotesize
    \centering
    {\small {ResNet-50}\\[2mm]
    \begin{tabular}{l r c c}
        & & \multicolumn{2}{c}{\bf\small Top 1}\\
        \bf\small Method & \bf\small Epochs & \bf\small 1\% & \bf\small 10\% \\\toprule
        \multicolumn{4}{l}{\footnotesize\itshape Methods using label propagation:}\\[1mm]
        UDA~\cite{xie2019unsupervised} & 800 & -- & 68.1 \\
        FixMatch~\cite{sohn2020fixmatch} & 300 & -- & 71.5 \\
        MPL~\cite{pham2020meta} & $^\star$800 & -- & 73.9 \\\midrule
        \multicolumn{4}{l}{\footnotesize\itshape Methods using only representation learning:}\\[1mm]
        BYOL~\cite{grill2020bootstrap} & 1000 & 53.2 & 68.8 \\
        SwAV~\cite{caron2020unsupervised} & 800 & 53.9 & 70.2 \\
        SwAV+CT~\cite{assran2020recovering} & 400 & -- & 70.8 \\
        SimCLRv2~\cite{chen2020big} & 800 & 57.9 & 68.4 \\
        SimCLRv2 (+Self.Dist.)~\cite{chen2020big} & 1200 & 60.0 & 70.5 \\
        \rowcolor{_fbteal3}
        \putalg & \bf 100 & \bf 63.8 & \bf 73.9 \\
        \rowcolor{_fbteal3}
        \putalg & \bf 200 & \bf 66.1 & \bf 75.0 \\
        \rowcolor{_fbteal3}
        \putalg & \bf 300 & \bf 66.5 & \bf 75.5 \\ \midrule
        \multicolumn{4}{l}{\footnotesize\itshape Non-parametric classification (no fine-tuning):}\\[1mm]
        \rowcolor{_fbteal3}
        \putalg-\textsc{nn} & 100 & 61.5 & 71.0 \\
        \rowcolor{_fbteal3}
        \putalg-\textsc{nn} & 200 & 63.2 & 71.9 \\
        \rowcolor{_fbteal3}
        \putalg-\textsc{nn} & 300 & 64.2 & 73.1 \\ \bottomrule
    \end{tabular}}
    \caption{{\bfseries (ResNet-50, ImageNet)} *For label propagation methods, the number of epochs is counted with respect to the unsupervised mini-batches. *For Meta Pseudo-Labels (MPL), the number of epochs only includes the student-network updates, and does not count the additional 500,000 teacher-network updates (computationally equivalent to an additional 800 epochs) that must happen sequentially (not in parallel) with the student updates. \putalg-\textsc{nn} refers to performing nearest-neighbour classification directly using the \putalg-pretrained representations, with the labeled training samples as support, while \putalg refers to fine-tuning a classifier using the available labeled data after \putalg-pretraining.}
    \label{tb:resnet50_results}
\end{table}
\begin{table}[t]
    \footnotesize
    \centering
    {\small {Additional ResNet Architectures}\\[2mm]
    \begin{tabular}{l l r c c}
        & & & \multicolumn{2}{c}{\bf\small Top 1}\\
        \bf\small Method & \bf\small Architecture & \bf\small Epochs & \bf\small 1\% & \bf\small 10\% \\\toprule
        BYOL~\cite{grill2020bootstrap} & ResNet-50 (2$\times$) &  1000 & 62.2 & 73.5 \\
        SimCLRv2~\cite{chen2020big} & ResNet-50 (2$\times$)  & 800 & 66.3 & 73.9 \\
        \rowcolor{_fbteal3}
        \putalg & ResNet-50 (2$\times$)  & \bf 100 &  \bf 68.2 & \bf 77.0 \\
        \rowcolor{_fbteal3}
        \putalg & ResNet-50 (2$\times$) & \bf 200 & \bf 69.6 & \bf 77.8 \\\midrule
        SimCLR~\cite{chen2020simple} & ResNet-50 (4$\times$) &  1000 & 63.0 & 74.4 \\
        BYOL~\cite{grill2020bootstrap} & ResNet-50 (4$\times$) &  1000 & 69.1 & 75.7 \\
        \rowcolor{_fbteal3}
        \putalg & ResNet-50 (4$\times$) &  \bf 100 & \bf 69.8 & \bf 78.5 \\
        \rowcolor{_fbteal3}
        \putalg & ResNet-50 (4$\times$) &  \bf 200 & \bf 69.9 & \bf 79.0 \\\bottomrule
    \end{tabular}}
    \caption{Semi-supervised classification results on ImageNet when training with larger ResNet architectures.}
    \label{tb:largeresnet_results}
\end{table}
In this section we analyze the features learned by \putalg on ImageNet~\cite{russakovsky2015imagenet}.
The standard procedure for evaluating semi-supervised methods on ImageNet is to assume that some percentage of the data is labeled, and treat the rest of the data as unlabeled.
For reproducibility, we use the same $1\%$ and $10\%$ data splits used in previous works~\cite{chen2020simple, chen2020big}.

While we assume that the overall support set contains all relevant labels for the downstream task, we believe this is reasonable since the overall (labeled) support set is small and can be more easily curated.
Exploring performance in settings with class imbalance or partial coverage are beyond the scope of this paper and are left as future work.

\paragraph{Baselines.} We focus on comparing \putalg to other methods in the literature that train using the same architectures to make a fair comparison. We do not include comparisons with results that first train a larger teacher model and then distill it to a smaller student~\cite{chen2020big}. For reference, the best reported result in the literature for a ResNet-50 and 1\% or 10\% labeled data are 73.9\% and 77.5\% top-1, achieved by distilling from a ResNet-152 with 3$\times$ wider channels and selective kernels~\cite{chen2020big}. We impose this constraint on the baselines to provide a fair comparison and better isolate what factors contribute to performance improvements. It is know that using distillation in conjunction with larger architectures can result in improvements for any method, and we leave further investigation of distilling larger models pre-trained with \putalg for future work.
\begin{figure}
    \centering
    \includegraphics[width=0.95\linewidth]{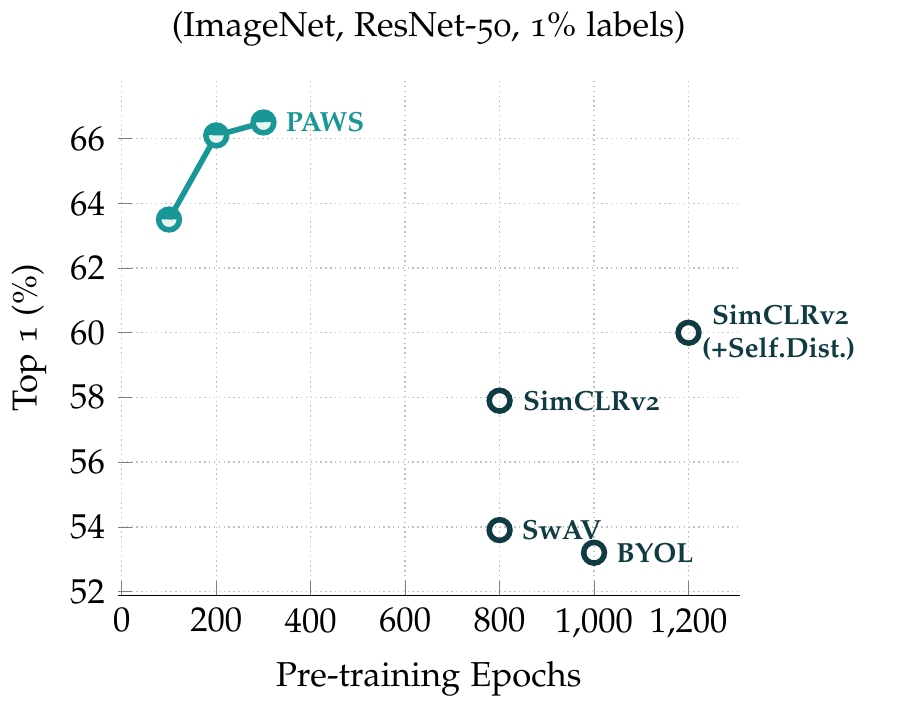}
    \caption{Training a ResNet-50 on ImageNet when only 1\% of the training set is labeled. The figure shows top-1 validation accuracy as a function of the number of training epochs. The proposed method, \putalg, achieves higher accuracy than previous work while requiring significantly fewer training epochs. Concretely, 100 epochs of \putalg training takes less than 8.5 hours using 64 \textsc{nvidia} V100-16G GPUs.}
    \label{fig:rn50_top1_1per}
\end{figure}

\paragraph{Comparison to self-supervised pre-training.}
We compare \putalg to other self-supervised pre-training approaches, namely SimCLRv2~\cite{chen2020big}, BYOL~\cite{grill2020bootstrap}, SwAV~\cite{caron2020unsupervised}, and SwAV+CT~\cite{assran2020recovering}, which simply adds a supervised contrastive-task loss to SwAV pre-training.
Results are reported in Table~\ref{tb:resnet50_results} for a ResNet-50 encoder network and in Figures~\ref{fig:rn50_top1_10per},\ref{fig:rn50_top1_1per}.
\putalg outperforms all other self-supervised representation learning approaches while using roughly $10\times$ fewer pre-training epochs.
Specifically, with just 100 epochs of pre-training, \putalg surpasses the state-of-the-art in self-supervised representation learning.
With 200 epochs of pre-training, \putalg further improves upon this result and achieves $75\%$ top-1 accuracy in the $10\%$ label setting and $66\%$ top-1 in the $1\%$ label setting, setting a new state-of-the-art for a ResNet-50. Using the pre-trained representations directly in a nearest-neighbour classifier (\putalg-\textsc{nn}) also performs surprisingly well---surpassing all other self-supervised representation learning methods---although fine-tuning increases top-1 accuracy by 1--3\%. Because \putalg with fine-tuning consistently achieves superior results compared to \putalg-\textsc{nn}, we only report results for \putalg for the remainder of the paper.

By reducing the number of pre-training epochs, \putalg can obtain significant computational savings compared to other approaches.
We illustrate this observation by comparing \putalg training time on 64 \textsc{nvidia} V100-16G GPUs to the self-supervised SwAV method trained on identical hardware~\cite{caron2020unsupervised}.
Pre-training with SwAV for 800 epochs requires $49.6$ hours, while pre-trianing with \putalg for 100 epochs only requires $8.2$ hours, and results in a $+9.9\%$ improvement in top-1 accuracy in the $1\%$ label setting, and a $+3.7\%$ improvement in top-1 accuracy in the $10\%$ label setting.
In contrast to SimCLRv2 and BYOL, the \putalg method does not use an additional momentum encoder or a memory buffer, and thereby avoids this added computational and memory overhead, but may also benefit (in terms of final model accuracy) by incorporating such innovations.

\paragraph{Comparison to semi-supervised methods.}
We also compare \putalg to other semi-supervised learning methods, namely UDA~\cite{xie2019unsupervised}, FixMatch~\cite{sohn2020fixmatch} and MPL~\cite{pham2020meta}.
Results are reported in Table~\ref{tb:resnet50_results} for a ResNet-50 encoder network in the 10\% label setting.
MPL holds the current state-of-art in semi-supervised learning, and simultaneously trains a student and teacher network for 800 epochs by adding a meta-learning loss and a teacher network to the UDA objective.
\putalg outperforms MPL, the state-of-art semi-supervised learning approach, while requiring significantly fewer training epochs.

\paragraph{Impact of larger architectures.}
We examine the impact of training larger encoder networks with \putalg pre-training.
Specifically, we pre-train ResNet-50 encoders with width multipliers of $2\times$ and $4\times$ in Table~\ref{tb:largeresnet_results}.
As expected, increasing the model capacity improves semi-supervised performance.
Specifically, pre-training a Resnet-50 (4$\times$) for 200 epochs with \putalg achieves $69.9\%$ top-1 accuracy in the $1\%$ label setting and $79.0\%$ top-1 accuracy in the $10\%$ label setting.
We expect increasing the model capacity further to yield additional performance improvements.
In general, results with the larger models are consistent with previous observations; \putalg pre-training outperforms other methods using similar architectures, while requiring significantly fewer pre-training epochs.

\section{Ablation Study}
\label{sec:ablation}

\paragraph{Longer training.}
The results reported in Section~\ref{sec:results} illustrate the performance of \putalg after 100 and 200 pre-training epochs.
We have not observed substantial benefits to training for longer than this.
Results after pre-training longer are shown in Table~\ref{tb:longer-training} for ResNet-50 $1\times$ and $2\times$ architectures.
\begin{table}[h]
    \centering
    {\small
    \begin{tabular}{lrcc}
        & & \multicolumn{2}{c}{\bf\small Top-1}\\
        {\bf\small Architecture} & {\bf\small Epochs} & {\bf\small 1\%} & {\bf\small 10\%} \\\toprule
        ResNet-50 & 100 & 63.8 & 73.9 \\
        ResNet-50 & 200 & 66.1 & 75.0 \\
        ResNet-50 & 300 & 66.5 & 75.5 \\\midrule
        ResNet-50 (2$\times$) & 100 & 68.2 & 77.0 \\
        ResNet-50 (2$\times$) & 200 & 69.6 & 77.8 \\
        ResNet-50 (2$\times$) & 300 & 69.6 & 77.7 \\\bottomrule
    \end{tabular}}
    \caption{{\bfseries Longer Training.} Examining the impact of longer training for various ResNet architectures on ImageNet. In both 1\% and 10\% label settings, and across both ResNet-50 and ResNet-50 (2$\times$) architectures, training for more than 200 epochs is generally not necessary and only yields marginal improvements.}
    \label{tb:longer-training}
\end{table}

While \putalg does not seem to benefit from longer training, it is interesting to observe that, by contrast, \putalg-\textsc{nn}, which performs nearest neighbours classification (no fine-tuning), may benefit from longer training, as suggested by Table~\ref{tb:resnet50_results}.

\paragraph{Learning during pre-training.}
To further examine the behaviour of \putalg, we examine some metrics related to model quality during pre-training in Figure~\ref{fig:training-curves}. Figure~\ref{subf-fig:train-loss} shows the training cross-entropy loss when pre-training for 100 epochs. As expected, this loss decreases during training, indicating that the model is learning to assign similar pseudo-labels to different views of the same image.

Figure~\ref{subf-fig:additional-loss} shows two additional losses computed using the sampled mini-batch and support set during training. Here, the instance discrimination loss is the normalized temperature-scaled cross-entropy loss~\cite{chen2020simple} computed using only unlabeled samples in the minibatch, and the classification loss is supervised noise-contrastive estimation loss~\cite{assran2020recovering,khosla2020supervised} computed using only labeled samples in the support set. Note that these losses are only computed and reported to better understand \putalg pre-training, and they are not used to train the model. The decreasing instance discrimination loss (top) indicates that the model is learning representations that are invariant to the data augmentations used to construct different views. The decreasing classification loss (bottom) also indicates that the model is learning to correctly classify labeled examples in the support set, despite not directly using labeled examples as targets.
\begin{figure*}[t]
    \centering
    \begin{subfigure}{0.26\linewidth}
        \centering
        \includegraphics[width=\linewidth]{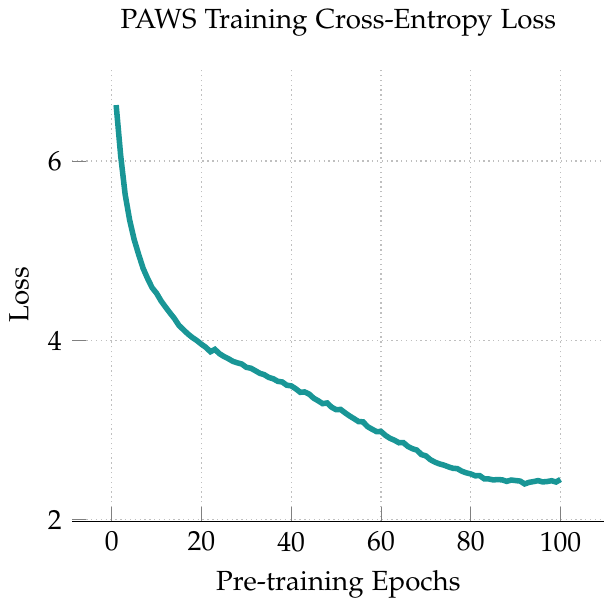}
        \caption{}
        \label{subf-fig:train-loss}
    \end{subfigure}\quad
    \begin{subfigure}{0.26\linewidth}
        \centering
        \includegraphics[width=\linewidth]{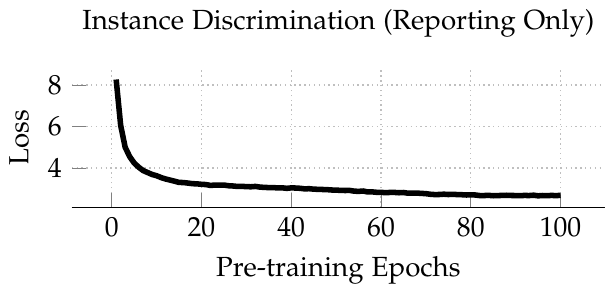}\\[2mm]
        \includegraphics[width=\linewidth]{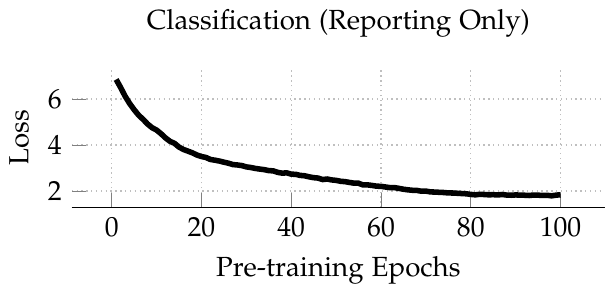}
        \caption{}
        \label{subf-fig:additional-loss}
    \end{subfigure}
    \quad{\color{lightgray}\vrule}\quad
    \begin{subfigure}{0.26\linewidth}
        \centering
        \includegraphics[width=\linewidth]{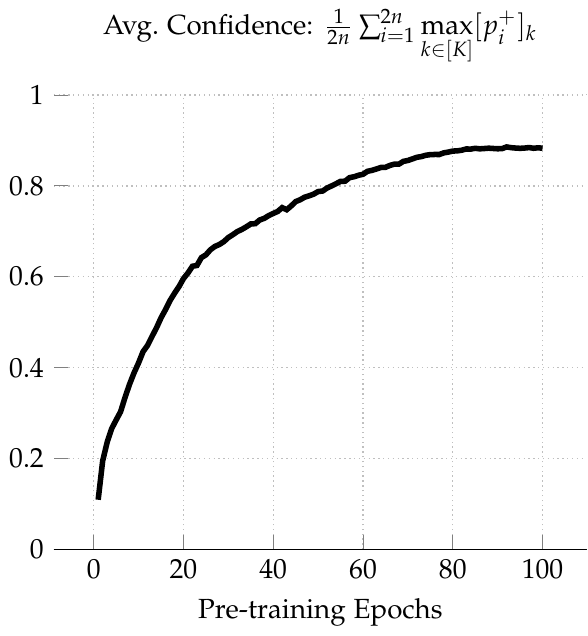}
        \caption{}
        \label{subf-fig:prob}
    \end{subfigure}
    \caption{Reporting various metric during training of a ResNet-50 on ImageNet, when 10\% of the data is labeled. Fig.\ref{subf-fig:train-loss} Cross-entropy loss between anchor view and (target) positive view during training. As expected, this loss decreases during training, indicating that the model is learning to assign similar pseudo-labels to different views of the same image.  Fig.\ref{subf-fig:additional-loss} Additional losses computed with the sampled mini-batch and support-set during training for \emph{reporting purposes only}. Specifically, no gradient is computed with respect to these losses. The decrease in the instance discrimination loss during training suggests that the model is learning representations that are invariant to the data-augmentations used for training. The decrease in the classification loss indicates that the model is learning to correctly classify the labeled support samples. Fig.\ref{subf-fig:prob} The average confidence of the $\text{argmax}$ target prediction during training. As training progresses, the model's target predictions become increasingly confident.}
    \label{fig:training-curves}
\end{figure*}

\paragraph{Support set construction.}
\putalg pre-training requires specifying how to sample a support set. At each iteration, a support set is sampled by first sampling a subset of the $K$ classes, and then sampling a certain number of images per class. We ablate the effect of these two parameters in Table~\ref{tb:support-set}. Since we experiment with ImageNet, we can sample up to 1000 classes. Overall, we observe that using a larger support set consistently improves performance. Sampling more classes and fewer samples per class is better than the contrary (cf.~bottom two rows). Note that no result is reported for 1000 classes and 16 images per class for the case of 1\% labeled data, since in that case there are only 12811 labeled images in total.
\begin{table}[h]
    \centering
    {\small
    \begin{tabular}{cccc}
        & & \multicolumn{2}{c}{\bf\small Top 1}\\
        {\bf\small Classes} & {\bf\small Imgs.~per Class} & \bf\small 1\% & \bf\small 10\% \\\toprule
        1000 & 16 & -- & 74.5 \\ 
        1000 & 12 & 63.9 & 74.2 \\
        \rowcolor{_fbteal3}
        960 & 7 & 63.8 & 73.9 \\
        960 & 4 & 63.7 & 72.0 \\
        448 & 8 & 61.8 & 70.1 \\\bottomrule
    \end{tabular}}
    \caption{{\bfseries Support Set.} Ablating the composition of the sampled support mini-batches when training a ResNet-50 on ImageNet for 100 epochs. Our default setup is shaded in green. Increasing the size of the support set improves performance. However, when sampling a fixed number of instances, it is preferable to sample many classes with a few images per class, rather than few classes with many images per class.}
    \label{tb:support-set}
\end{table}

\paragraph{Prediction head.}
As noted in Section~\ref{sec:implementation}, we include a prediction head to facilitate comparison to previous work~\cite{grill2020bootstrap}, where it was suggested as a mechanism to prevent representation collapse. Table~\ref{tb:prediction-head} illustrates that this is not needed when pre-training with \putalg, and in fact the performance of \putalg is marginally better when the prediction head is omitted during pre-training.
\begin{table}[h]
    \centering
    {\small
    \begin{tabular}{lcc}
        & \multicolumn{2}{c}{\bf\small Top 1}\\
        & \bf\small 100 epochs & \bf\small 200 epochs \\\toprule
        \rowcolor{_fbteal3}
        With Prediction Head & 73.9 & 75.0 \\
        Without Prediction Head & 74.2 & 75.2 \\\bottomrule
    \end{tabular}}
    \caption{{\bfseries Prediction Head.} Examining the effect of the prediction-head when training a ResNet-50 on ImageNet and 10\% of the training set is labeled. Our default setup is shaded in green. Unlike self-supervised methods that collapse without a prediction head~\cite{grill2020bootstrap,chen2020exploring}, \putalg still converges without a prediction head, as predicted by the theoretical result Proposition~\ref{prop:collapse}.}
    \label{tb:prediction-head}
\end{table}

\paragraph{ME-Max regularization.}
Recall that \putalg pre-training uses a cross-entropy loss with sharpened targets to encourage representations of different views of the same image to be consistent (reducing cross-entropy), and it also uses the mean-entropy maximization regularizer to maximize the entropy of the average prediction, computed across the unlabeled samples in the mini-batch. Table~\ref{tb:me-max} illustrates the effect of training with only the cross-entropy term and disabling the \textsc{me-max} regularization. While the impact is more pronounced in the setting with only 1\% labeled data, using \textsc{me-max} regularization improves performance in all cases.
\begin{table}[h]
    \centering
    {\small
    \begin{tabular}{lcc}
        & \multicolumn{2}{c}{\bf\small Top 1}\\
        & \bf\small 1\% & \bf\small 10\% \\\toprule
        \rowcolor{_fbteal3}
        With \textsc{me-max} & 63.8 & 73.9 \\
        Without \textsc{me-max} & 52.9 & 73.6 \\\bottomrule
    \end{tabular}}
    \caption{{\bf ME-Max Regularization.} Examining the effect of the \textsc{me-max} regularizer when training a ResNet-50 on ImageNet for 100 epochs. Our default setup is shaded in green. The \textsc{me-max} regularizer is especially helpful in the 1\% label setting, but only provides a marginal improvement in the 10\% label setting.}
    \label{tb:me-max}
\end{table}

\paragraph{Small batch training.}
Our default \putalg implementation runs on 64 GPUs, with a batch-size of 4096 unlabeled images and a supervised support mini-batch of 6720 images, comprising 960 classes and 7 images per class.
We observe that \putalg can also be effectively trained with small batch sizes as well.
Table~\ref{tb:batch-size} ablates the effect of the batch size when training on 8 \textsc{nvidia} V100-16G GPUs, when 10\% of the training set is labeled.
For this small-batch experiment, we set the unsupervised batch size to 256 and attempt to use as large a support set as is possible on 8 GPUs, since the ablation in Table~\ref{tb:support-set} shows that larger supports lead to better performance.
Following a roughly square-root scaling of the learning-rate (relative to the large-batch default setup), we linearly warmup the learning-rate from $0.3$ to $1.2$ during the first 10 epochs of pre-training, and decay it following a cosine schedule thereafter.
We also disable \textsc{me-max} regularization for the small batch experiment, since it is not obvious, a priori, that such regularization will be effective for small batches.
All other settings are kept fixed.
Table~\ref{tb:batch-size} demonstrates that \putalg can still achieve good performance with small batches after only 100 epochs of pre-training on 8 GPUs.

\begin{table}[h]
    \centering
    {\small
    \begin{tabular}{l l c c c}
        & & \multicolumn{2}{c}{\bf\small Support Set} \\
        \bf\small GPUs & \bf\small Batch Size & Classes & Imgs.~per Class & \bf\small Top 1 \\\toprule
        8 V100 & 256 & 560 & 3 & 70.2 \\
        64 V100 & 4096 & 448 & 8 & 70.1 \\
        \rowcolor{_fbteal3}
        64 V100 & 4096 & 960 & 7 & 73.9 \\\bottomrule
    \end{tabular}}
    \caption{{\bfseries Batch Size.} Examining the effect of the batch size when training a ResNet-50 on ImageNet for 100 epochs and 10\% of the training set is labeled. \putalg still achieves good performance after only 100 epochs of pre-training with small batch sizes on 8 \textsc{nvidia} V100-16G GPUs.}
    \label{tb:batch-size}
\end{table}

\section{Discussion}
By leveraging a small labeled support set during pre-training, \putalg achieves competitive classification accuracy for semi-supervised problems and requires significantly less training  than previous works. \putalg also provably avoids collapsing solutions, a common challenge in self-supervised approaches.

\putalg can be interpreted as a neural network architecture with an external memory that is trained using the \emph{assimilation \& accommodation} principle~\cite{piaget1964cognitive}. During assimilation, \putalg updates the representations of new observations so that they are easily described by its external memory (or schemata), while during accommodation, \putalg updates its external memory to account for the new observations.

The use of a supervised support set has some practical advantages as well, since it enables the model to learn efficiently.
However, it remains an interesting question to see if one can learn competitive representations in this framework using only instance supervision and more flexible memory representations. We plan to investigate those directions in future work.

{\small
\bibliographystyle{ieeetr}
\bibliography{refs.bib}
}

\onecolumn
\appendix
{
\section*{\LARGE{Appendix}}
}

\section{Implementation Details}
\label{apndx:implementation}

\paragraph{Sampling the support mini-batches.}
In each iteration, \putalg randomly samples a small support mini-batch from the set of available labeled samples to compute the unsupervised consistency loss.
Specifically, these support samples are used to determine the soft pseudo-labels for the unlabeled image views.
To construct the support mini-batch in each iteration, we first sample a subset of classes, and then sample an equal number of images from each sampled class.
Notably, we sample images \emph{with replacement}.
Therefore, while images in the same support mini-batch in a given iteration are always unique, some of the images may be re-sampled in the subsequent iteration's support mini-batch.
This decision was made to simplify the implementation, although it is possible that epoch-based sampling of the support mini-batches (i.e., iterating through labeled samples with random reshuffling) could lead to improved performance.

\paragraph{Projection \& prediction heads.}
The projection head is a 3-layer MLP with ReLU activations, consisting of three fully-connected layers of dimension $2048$, and Batch Normalization applied to the hidden layers.
The prediction head is a 2-layer MLP with ReLU activations, consisting of two fully-connected layers.
The hidden layer has dimension $512$, and the output layer has dimension $2048$.
Batch Normalization is applied to the input of the prediction head as well as to the hidden layer.
The architectures of these projection and prediction heads are similar to those used in previous works on self-supervised learning~\cite{grill2020bootstrap, chen2020exploring, chen2020big}.

\paragraph{Fine-tuning details.}
Following~\cite{chen2020big}, we fine-tune a linear classifier from the first layer of the projection head in the pre-trained encoder $f_\theta$, and initialize the weights of the linear classifier to zero.
Specifically, we simultaneously fine-tune the encoder/classifier weights by optimizing a supervised cross-entropy loss on the small set of available labeled samples.
We do not employ weight-decay during fine-tuning, and only make use of basic data augmentations (random cropping and random horizontal flipping).
Following the experimental protocol of BYOL~\cite{grill2020bootstrap}, we sweep the learning rate $\{0.01, 0.02, 0.05, 0.1, 0.2\}$ and the number of epochs $\{30, 50\}$.
Similarly to BYOL, to avoid performing parameter selection on the ImageNet validation set (used for reporting), we use a local validation set (12000 images from the ImageNet train set).
Optimization is conducted using SGD with Nesterov momentum.
We use a momentum value of 0.9 and a batch size of 1024.
All results are reported using a single center-crop.

\begin{figure}[b]
    \begin{center}
      \includegraphics[width=0.4\linewidth]{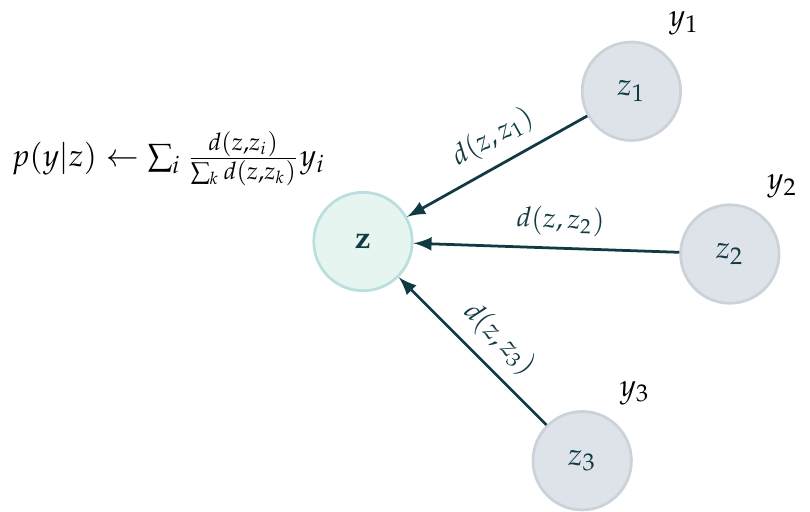}
    \end{center}
    \caption{{\bfseries Soft Nearest Neighbours classifier $\pi_d$.} For a $K$-way classification problem, and a scalar-valued similarity function $d(\cdot, \cdot) \geq 0$, the similarity classifier assigns a soft pseudo-label $y \in [0,1]^K$ to a representation $z$ by measuring its similarity to a set of labeled representations $\{z_i\}_i$ with class labels $\{y_i \in [0,1]^K\}_i$. The soft pseudo-label $y$ is a weighted average of the labels $\{y_i\}_i$, with labels corresponding to more similar representations assigned larger weights.}
    \label{fig:snn}
\end{figure}
\paragraph{Nearest neighbours classifier.}
We also report additional results without fine-tuning the encoder.
Specifically, the \putalg-\textsc{nn} results in Table~\ref{tb:resnet50_results} are reported by directly applying a soft nearest neighbours classifier to the pre-trained representations.
To determine a class prediction for an image $\bf x$, we compare its representation, $z = f_\theta({\bf x})$, to the representations of the available labeled training samples, ${\bf z_\mathcal{S}} \in \R^{M\times d}$, and subsequently choose the class label with the highest probability under the similarity classifier; i.e., ${\text{argmax}}_{k \in [1000]} \left[\pi_d\left(z, {\bf z_\mathcal{S}}\right)\right]_k$.
All results are reported using a single center-crop.
Figure~\ref{fig:snn} provides a schematic of the nearest neighbours classifier in an illustrative example with only only three labeled training images.

\paragraph{Momentum.}
When using momentum in our experiments, unless otherwise specified, we implicitly refer to classical momentum, commonly referred to as heavy-ball or Polyak momentum, given by
\begin{align}
\begin{split}
    v_{t+1} =&\ \beta v_{t} - \eta_t \frac{1}{\lvert \mathcal{B} \rvert} \sum_{x \in \mathcal{B}} \nabla_\theta \ell(x, \theta_t)\\
    \theta_{t+1} =&\ \theta_{t} + v_{t+1},
\end{split} \label{eq:mom}
\end{align}
where $\beta \geq 0$ is the momentum parameter and $\eta_t \geq 0$ is the learning rate.
The model parameters are denoted by $\theta$ and the velocity buffer is denoted by $v$. 
Note that in some deep learning frameworks, such as PyTorch and Tensorflow, the update is instead written
\begin{align}
\begin{split}
    v_{t+1} =&\ \beta v_{t} + \frac{1}{\lvert \mathcal{B} \rvert} \sum_{x \in \mathcal{B}} \nabla_\theta \ell(x, \theta_t)\\
    \theta_{t+1} =&\ \theta_{t} - \eta_t v_{t+1}.
\end{split} \label{eq:pytorch-mom}
\end{align}
Specifically, in eq.~\eqref{eq:pytorch-mom}, the learning rate is not incorporated into the velocity buffer update.
Thus, under a trivial re-parameterization, the eq.~\eqref{eq:pytorch-mom} implementation can be interpreted as classical momentum with a time-varying momentum schedule, given by $\{\beta \frac{\eta_{t}}{\eta_{t-1}} \}_{t > 0}$.
Thus, training with learning rate warmup can result in momentum values $> 1$ during the warmup phase, leading to instability early on in training.
Additionally, note that training using the implementation of momentum SGD in eq.~\eqref{eq:pytorch-mom} with an adaptive learning rate, e.g., as prescribed by the LARS optimizer, can lead to drastically different momentum values at consecutive iterations.
However, it is worth pointing out that the LARS optimizer provided in the popular \textsc{nvidia} \textsc{apex} package wraps around the optimizer, and applies learning-rate adaptation by directly scaling the gradient before the optimization step.
Therefore, using the \textsc{nvidia} \textsc{apex} implementation of LARS with the PyTorch implementation of momentum SGD, without accounting for the subtle implementation differences of PyTorch momentum, produces an odd hybrid of equations~\eqref{eq:mom} and~\eqref{eq:pytorch-mom}.
In our experiments, we use the original version of classical momentum with a constant momentum parameter (i.e., equation~\eqref{eq:mom}), and observe a non-trivial improvement in performance, especially when coupled with LARS adaptation.

\paragraph{Multi-Crop.}
Figure~\ref{fig:method} illustrates the \putalg method when generating two views of each unlabeled image, however, as mentioned in Section~\ref{sec:implementation}, we use the multi-crop data-augmentation of SwAV~\cite{caron2020unsupervised} to generate more than two views of each image in all of our experiments.
Given an unlabeled image, we generate several views of that image by taking two large crops ($224\times224$) and six small crops ($96\times96$).
We use the {\tt RandomResizedCrop} method from the {\tt torchvision.transforms} module in PyTorch.
The two large-crops (global views) are generated with scale $(0.14, 1.0)$, and the six small-crops (local views) are generated with scale $(0.05, 0.14)$, following the original implementation in~\cite{caron2020unsupervised}.

When computing the \putalg loss, each small crop has two positive views (the two global views), and each large crop has one positive view (the other global view).
Specifically, let ${\bf x} \in \R^{n\times(3\times H\times W)}$ denote a mini-batch  of $n$ unlabeled images.
For each image ${\bf x}_i$ in the mini-batch, we generate two large crop views, ${\bf x}^{(1)}_i, {\bf x}^{(2)}_i \in \R^{3\times224\times224}$, and six small crop views, ${\bf x}^{(3)}_i, \ldots, {\bf x}^{(8)}_i \in \R^{3\times96\times96}$.
Let $z^{(1)}_i, \ldots, z^{(8)}_i \in \R^d$ denote the representations computed from ${\bf x}^{(1)}_i, \ldots, {\bf x}^{(8)}_i$ respectively, and let $p^{(1)}_i, \ldots, p^{(8)}_i$ denote the predictions for representations $z^{(1)}_i, \ldots, z^{(8)}_i$ respectively.
Lastly, let $\overline{p} \defeq \frac{1}{8n}\sum^n_{i=1}\sum^8_{k=1}\rho(p^{(k)}_i)$ denote the average of the sharpened predictions (recall $\rho(\cdot)$ is the sharpening function defined in Section~\ref{sec:methodology}).
The overall \putalg objective to be minimized is
\begin{equation}
    \label{eq:multicrop_objective}
    \frac{1}{8n} \sum^n_{i=1} \left(H(\rho(p^{(1)}_i), p^{(2)}_i) + H(\rho(p^{(2)}_i), p^{(1)}_i) + \sum^8_{k=3} H\left(\frac{\rho(p^{(1)}_i) + \rho(p^{(2)}_i)}{2}, p^{(k)}_i\right)\right) - H(\overline{p}).
\end{equation}
In equation~\eqref{eq:multicrop_objective}, $p^{(1)}_i$ and $p^{(2)}_i$ correspond to the two large crop views, and $p^{(3)}_i,\ldots,p^{(8)}_i$ correspond to the six small crop views.
Thus, from equation~\eqref{eq:multicrop_objective}, the target for $p^{(1)}_i$ is the sharpened positive view prediction $\rho(p^{(2)}_i)$, and similarly, the target for $p^{(2)}_i$ is the sharpened positive view prediction $\rho(p^{(1)}_i)$.
For the small views, $p^{(3)}_i,\ldots,p^{(8)}_i$, we use both $\rho(p^{(1)}_i)$ and $\rho(p^{(2)}_i)$ as positive view predictions and average those to produce a single target.
This is similar to the use of multicrop in SwAV~\cite{caron2020unsupervised}.
While the multi-crop augmentation makes the notation in equation~\eqref{eq:multicrop_objective} a little cumbersome, note that this objective is nearly identical to the objective in equation~\eqref{eq:objective}, except that~\eqref{eq:multicrop_objective} also includes a sum over the small crop-views, $\sum^8_{k=3}(\cdots)$.

Intuitively, by only using the large crops as positive samples (note that small crops are never positive samples for any of the other views), the method learns a global-to-local feature mapping, which maps local features in the small crops to global features in the large crops.
The multi-crop augmentation is in fact an essential component of the \putalg algorithm.
As will be shown in Appendix~\ref{apndx:cifar10}, the multi-crop augmentation strategy in \putalg is not only important when training on large internet images, containing possibly obfuscated objects at various scales, such as ImageNet~\cite{russakovsky2015imagenet}, but is also important for small object-centric images, such as CIFAR10~\cite{krizhevsky2009learning}.
This observation suggests that the benefit of ``local-to-global'' matching induced by the multi-crop augmentation strategy in \putalg goes beyond simply inducing obfuscation or scale invariant image representations.

\section{Comparison to Supervised Learning}
\begin{figure}[h]
    \centering
    \begin{subfigure}{0.32\linewidth}
        \centering
        \includegraphics[width=\linewidth]{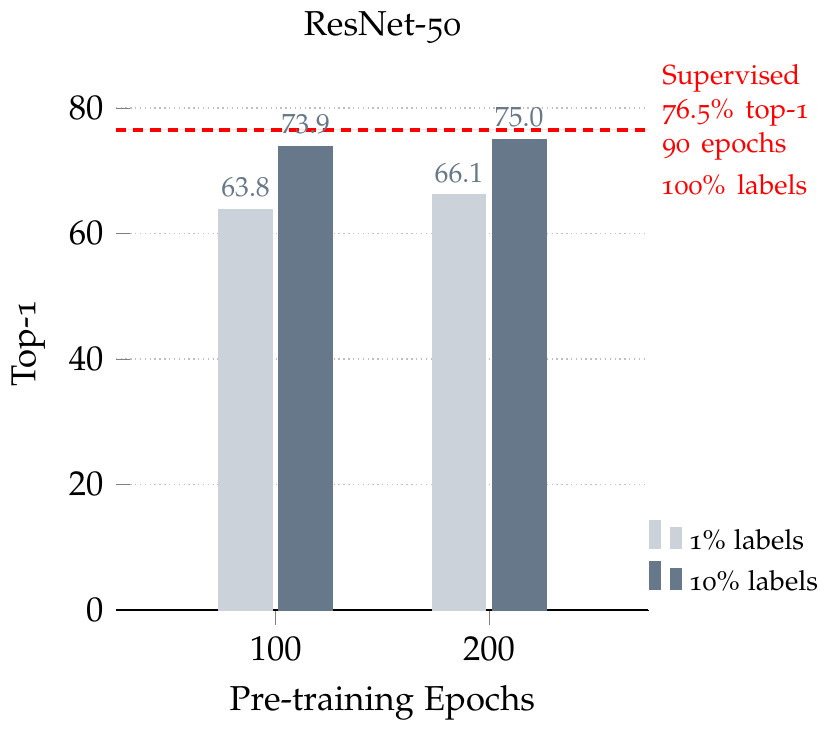}
    \end{subfigure}
    \begin{subfigure}{0.32\linewidth}
        \centering
        \includegraphics[width=\linewidth]{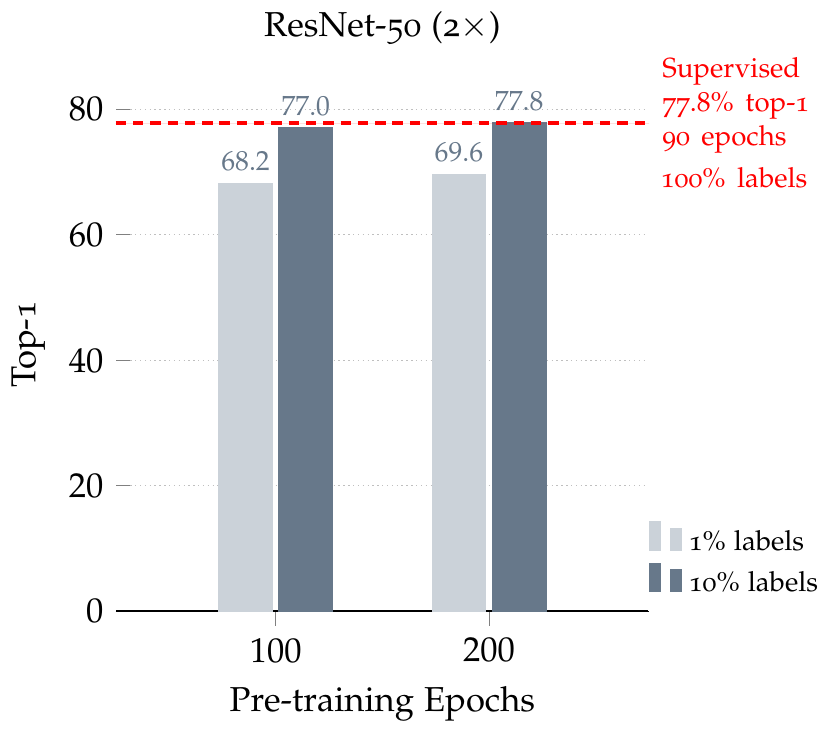}
    \end{subfigure}
    \begin{subfigure}{0.32\linewidth}
        \centering
        \includegraphics[width=\linewidth]{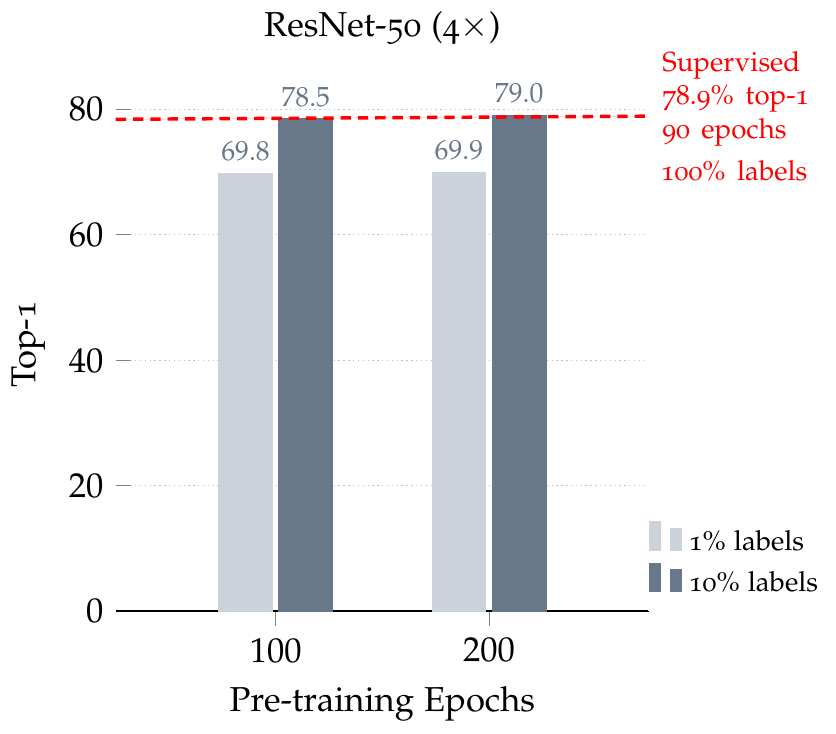}
    \end{subfigure}
    \caption{Comparing ResNet architectures trained using \putalg on ImageNet, with only a small fraction of the training instances labeled, to the same ResNet architectures trained in a fully supervised manner on ImageNet with \emph{all} training instances labeled. Supervised models are reported from SimCLR~\cite[Appendix B.3]{chen2020simple}, and ablated over the same data-augmentations used to train \putalg. We report results for the best supervised model found by~\cite{chen2020simple} over the data-augmentation sweep. When training with ResNet-50 (2$\times$) and ResNet-50 (4$\times$) architectures, \putalg matches the performance of fully supervised learning. Specifically, \putalg is the first method to, with only 10\% of training instances labeled, match fully supervised learning on ImageNet with 100\% of training instances labeled, using the same architecture, and without distilling from a larger teacher model. Notably, this result is achieved with only 200 epochs of training.}
    \label{fig:supervised}
\end{figure}
\begin{wrapfigure}{r}{0.4\textwidth}
    \centering
    \vspace{-1em}
    \includegraphics[width=0.9\linewidth]{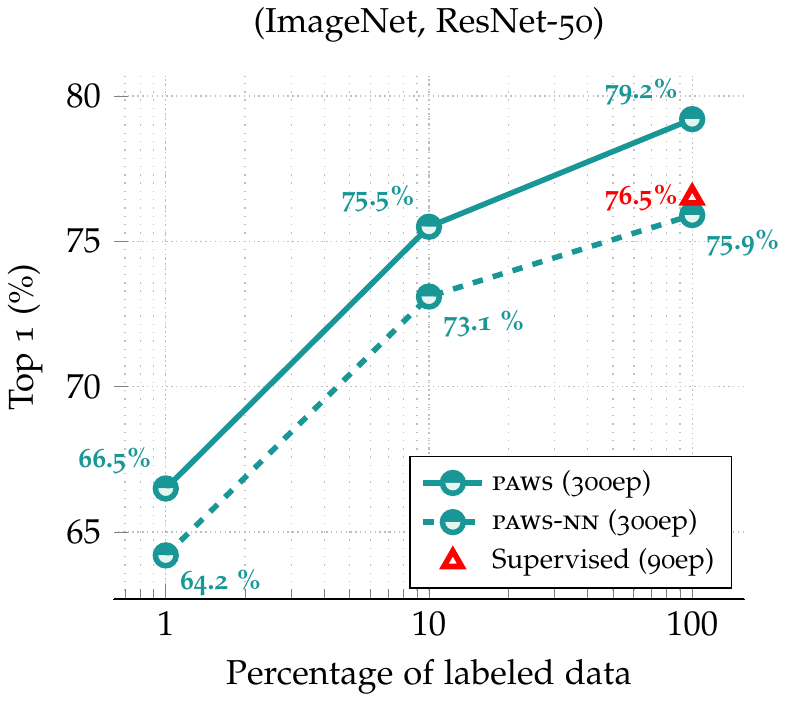}
    \caption{Examining \putalg scaling when training a ResNet-50 on ImageNet with various percentages of labeled data. \putalg-\textsc{nn} refers to performing nearest-neighbour classification directly using the \putalg-pretrained representations, with the labeled training samples as support, while \putalg refers to fine-tuning a classifier using the available labeled data after \putalg-pretraining. Supervised models are reported from SimCLR~\cite[Appendix B.3]{chen2020simple}, and ablated over the same data-augmentations used to train \putalg. We report results for the best supervised model found by~\cite{chen2020simple} over the data-augmentation sweep. When trained with 100\% of the available labels, \putalg surpasses fully supervised learning and produces representations that are well calibrated for non-parametric classification (\putalg-{\sc nn}).}
    \label{fig:supervised-100p}
\end{wrapfigure}

Figure~\ref{fig:supervised} compares \putalg semi-supervised training to supervised learning with the same architecture using a standard cross-entropy loss.
The supervised baseline is trained on the full set of ImageNet labels, whereas the \putalg result is obtained by pre-training (and fine-tuning) with access to only a small fraction of the ImageNet labels.
The supervised models are reported from SimCLR~\cite[Appendix B.3]{chen2020simple}, where they are swept over the number of training epochs $\{90, 500, 1000\}$, and ablated over the data-augmentations used in \putalg pre-training $\{\text{crop/flip}, \text{crop/flip}+\text{color distortion}, \text{crop/flip}+\text{color distortion}+\text{Gaussian blur}\}$.
Figure~\ref{fig:supervised} reports results for the best supervised model found by~\cite{chen2020simple}, which corresponds to 90 epochs of training with random crop/flip for the ResNet-50, and 90 epochs of training with random crop/flip+color distortion for the wider ResNets.
When training with ResNet-50 (2$\times$) and ResNet-50 (4$\times$) architectures, \putalg matches the performance of fully supervised learning.
Specifically, \putalg is the first method to, with only 10\% of training instances labeled, match fully supervised learning on ImageNet with 100\% of training instances labeled, using the same architecture, and without distilling from a larger teacher model.
Notably, this result is achieved with only 200 epochs of training.
However, as a word of caution, this experiment should only be interpreted as a type of ablation, since the performance of supervised learning models can likely be improved by incorporating additional advanced supervised augmentation strategies like Mixup~\cite{zhang2017mixup}, CutMix~\cite{yun2019cutmix}, and AutoAugment~\cite{cubuk2019autoaugment}, which simultaneously learns a data-augmentation policy during training.

\section{Additional Experiments --- CIFAR10}
\label{apndx:cifar10}
We also evaluate the \putalg pre-training scheme on the CIFAR10~\cite{krizhevsky2009learning} dataset using a single {\sc nvidia} V100 GPU.
We first pre-train a network using \putalg on CIFAR10 with access to 4000 labels, and then report the nearest-neighbour classification accuracy on the test set using the 4000 labeled training images as support.
On CIFAR10 we only report \putalg-{\sc nn}, and do not fine-tune a linear classifier on top of the network.
For details on the Nearest Neighbours classifier, see Appendix~\ref{apndx:implementation}.
\begin{figure}[t]
\begin{subfigure}{0.49\linewidth}
    \footnotesize
    \centering
    {\small {WideResNet-28-2, CIFAR10, 4000 labels}\\[2mm]
    \begin{tabular}{l r c}
        \bf\small Method & \bf\small Epochs & \bf\small Top-1 \\\toprule
        Supervised Learning with full dataset~\cite{pham2020meta} & 1000 & 94.9 $\pm$ 0.2 \\\midrule
        \multicolumn{3}{l}{\footnotesize\itshape Methods using label propagation:}\\[1mm]
        Temporal Ensemble~\cite{laine2016temporal} & 300 & 83.6 $\pm$ 0.6 \\
        Mean Teacher~\cite{tarvainen2017mean} & 300 & 84.1 $\pm$ 0.3 \\
        VAT + EntMin~\cite{miyato2018virtual} & 123 & 86.9 $\pm$ 0.4 \\
        LGA + VAT~\cite{jackson2019semi} & -- & 87.9 $\pm$ 0.2 \\
        ICT~\cite{verma2019interpolation} & 600 & 92.7 $\pm$ 0.0 \\
        MixMatch~\cite{berthelot2019mixmatch} & -- & 93.8 $\pm$ 0.1 \\
        ReMixMatch~\cite{berthelot2019remixmatch} & -- & 94.9 $\pm$ 0.0 \\
        EnAET~\cite{wang2019enaet} & 1024 & 94.7 $\pm$ \_\_\_ \\
        UDA~\cite{xie2019unsupervised, pham2020meta} & 2564 & 94.5 $\pm$ 0.2 \\
        FixMatch~\cite{sohn2020fixmatch} & -- & 95.7 $\pm$ 0.1 \\
        MPL~\cite{pham2020meta} & 2564 & \bf 96.1 $\pm$ 0.1 \\\midrule
        \multicolumn{3}{l}{\footnotesize\itshape Non-parametric classification:}\\[1mm]
        \rowcolor{_fbteal3}
        \bf \putalg-\textsc{nn} & \bf 600 & \bf 96.0 $\pm$ 0.2 \\\bottomrule
    \end{tabular}}
    \caption{}
    \label{tb:cifar10_results}
\end{subfigure}
\begin{subfigure}{0.5\linewidth}
    \footnotesize
    \centering
    {\small {Additional Architectures, CIFAR10, 4000 labels}\\[2mm]
    \begin{tabular}{l l c c c}
        \bf\small Method & \bf\small Architecture & \bf\small Params & \bf\small Epochs & \bf\small Top-1 \\\toprule
        SimCLRv2~\cite{chen2020big} & ResNet-200 (+SK) & 95M & 800 & \bf 96.0 \\
        SimCLRv2~\cite{chen2020big} & ResNet-18 (+SK) & 12M & 800 & 92.1 \\\midrule
        \multicolumn{5}{l}{\footnotesize\itshape Non-parametric classification:}\\[1mm]
        \rowcolor{_fbteal3}
        \bf \putalg-\textsc{nn} & WideResNet-28-2 & \bf 1.5M & \bf 600 & \bf 96.0 \\\bottomrule
    \end{tabular}}\vspace{5.5mm}
    {\small PAWS Training Cross-Entropy Loss}\\[2mm]
    \includegraphics[width=0.8\linewidth]{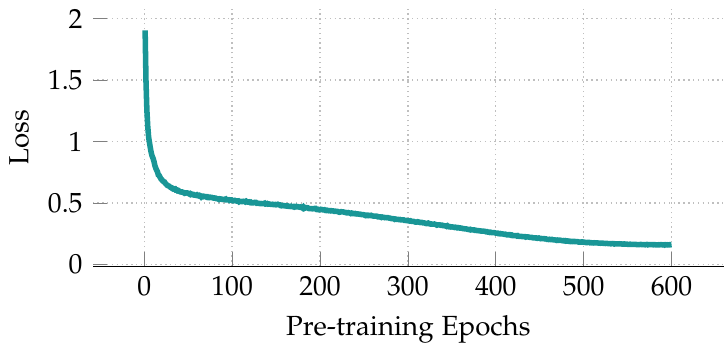}
    \caption{}
    \label{tb:cifar10_results_simclrv2}
\end{subfigure}
\caption{Training a WideResnet-28-2 on CIFAR10.*For label propagation methods, the number of epochs is counted with respect to the unsupervised mini-batches. *For Meta Pseudo-Labels (MPL), the number of epochs only includes the student-network updates, and does not count the additional 1,000,000 teacher-network updates (computationally equivalent to roughly an additional 2564 epochs) that must happen sequentially (not in parallel) with the student updates. \putalg-\textsc{nn} refers to performing nearest-neighbour classification directly using the \putalg-pretrained representations, with the 4000 labeled training samples as support. We report the mean top-1 accuracy and standard deviation across 5 seeds for the 4000 label split.}
\end{figure}

\paragraph{Implementation details.}
We adopt similar hyper-parameters to the ImageNet experiments.
Specifically, for pre-training, we use the LARS optimizer with a momentum value of 0.9, weight decay $10^{-6}$, cosine-similarity temperature of $\tau=0.1$, and target sharpening temperature of $T=0.25$.
To construct the different image views, we use the multi-crop  strategy,  generating  two large crops ($32 \times 32$),  and  six  small  crops  ($18 \times 18$) of each unlabeled image. 
We use the {\tt RandomResizedCrop} method from the {\tt torchvision.transforms} module in PyTorch.
The two large-crops (global views) are generated with scale $(0.75, 1.0)$, and the six small-crops (local views) are generated with scale $(0.3, 0.75)$.
We use a batch-size of 256 and linearly warm-up the learning rate from 0.8 to 3.2 during the first 10 epochs of pre-training, and decay it following a cosine schedule thereafter.
To construct the support mini-batch at each iteration, we also randomly sample 640 images, comprising 10 classes and 64 images per class, from the labeled set, and apply label smoothing with a smoothing factor of $0.1$.
For all sampled images (both unlabeled images and support images) we apply the basic set of SimCLR data-augmentations, specifically, random crop, horizontal flip, and color distortion (but no Gaussian blur).
However, in contrast to the ImageNet setup, we also generate two views of each sampled support image.
On CIFAR10 we find it much easier for the network to learn to classify the images than to perform instance discrimination.
Generating two views of each sampled support image helps the network improve its instance discrimination ability and produce representations that are invariant to the data-augmentations used for training.

The encoder $f_\theta$ in our experiments is a WideResNet-28-2~\cite{zagoruyko2016wide} trunk without dropout, containing a 3-layer MLP projection head, consisting  of three fully-connected layers of dimension 128, and Batch Normalization applied to the hidden layers.
To simplify the implementation, we do not include a prediction head after the projection head.
As shown in Table~\ref{tb:prediction-head} on ImageNet, \putalg pre-training works well without a prediction head, and we find this to be true on CIFAR10 as well.

Following pre-training, we freeze the batch-norm layers, and fine-tune the trunk of the network for 180 optimization steps on the available labeled samples using the supervised contrastive loss of~\cite{assran2020recovering}, and do not apply any data-augmentations during this phase.
The point of these few optimization steps is to tighten the representation clusters of the labelled training samples before using them as support to classify the test images.
For this phase, we use momentum SGD with a batch-size of 640 (comprising 64 images from 10 classes), and sample the mini-batches with replacement; i.e., while images in the same mini-batch in a given iteration are always unique, some of the images may be re-sampled in the subsequent iteration’s mini-batch.
We use a cosine-temperature of $\tau=0.1$, momentum parameter 0.9, a learning rate of $0.1$ with cosine-decay, and no weight-decay.

\paragraph{Results.}
Table~\ref{tb:cifar10_results} compares \putalg-\textsc{nn} to other semi-supervised learning methods trained using identical networks (WideResNet-28-2) on CIFAR10 with access to 4000 labels.
Although the intention here is to simply validate \putalg on another dataset, the observations are similar to ImageNet.
By using the pre-trained representations directly in a nearest neighbour classifier, \putalg can match the state-of-the-art on CIFAR10 with significantly less training.
It is possible that carefully fine-tuning a linear classifier on top of the trunk and incorporating more advanced data-augmentations would further improve performance.
Table~\ref{tb:cifar10_results_simclrv2} compares \putalg-\textsc{nn} to the self-supervised SimCLRv2~\cite{chen2020big} method trained (and fine-tuned) with larger architectures.
The \putalg method achieves superior performance in fewer pre-training epochs, using a residual network containing over $60\times$ fewer parameters.

\section{Alternative Strategies for Non-Collapse}
\label{apndx:theory}

Proposition~\ref{prop:collapse} provides a theoretical guarantee that the proposed method is immune to the trivial collapse of representations.
The underlying principle is that collapsing representations result in high entropy predictions under the non-parametric similarity classifier, but the targets are always low-entropy (because we sharpen them), and so collapsing all representations to a single vector is not a stationary point of the training dynamics.
In this section we demonstrate two simple alternative strategies to guarantee non-collapse of representations without making the target-sharpening assumption.

\subsection{Semi-Supervised Prediction}
If an image in the sampled mini-batch of image views has a class label, then we can directly use that class label as the target for its prediction $p$, rather than using the positive view prediction, $p^+$, as the target.
Under such a scenario, Proposition~\ref{prop:semi-collapse} provides the theoretical guarantee.

\begin{assumption}[Semi-Supervised Image Views]
\label{ass:semi-views}
Each mini-batch of image views contains at least one labeled sample.
\end{assumption}
\begin{proposition}[Non-Collapsing Representations --- Semi-Supervised]
\label{prop:semi-collapse}
Suppose Assumptions~\ref{ass:balanced} and~\ref{ass:semi-views} hold.
If the representations collapse, i.e., $z = z_i$ for all $z_i \in \mathcal{S}$, then $\norm{\nabla H(p^+, p)} > 0$, and the solution is non-stationary.
\end{proposition}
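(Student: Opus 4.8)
The plan is to mirror the proof of Proposition~\ref{prop:collapse}, replacing the sharpened positive-view target by the ground-truth one-hot label for those image views that carry one. Fix a labeled image view in the mini-batch (one exists by Assumption~\ref{ass:semi-views}) with one-hot label $y$, true class $c$, and prediction $p \defeq \pi_d(z, {\bf z_\mathcal{S}})$; under the semi-supervised prediction rule its loss term is $H(y, p) = -\log p_c$. The first step is identical to Proposition~\ref{prop:collapse}: when the representations collapse, all similarity weights in $\pi_d$ are equal, so $p$ is the empirical label average over the support mini-batch, which by class-balanced sampling (Assumption~\ref{ass:balanced}) equals $\tfrac1K\mathbf{1}_K$, the uniform distribution over the $K$ sampled classes. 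Hence $p_c = \tfrac1K < 1$ (for $K \ge 2$; $K=1$ is degenerate), so $H(y,p) = \log K > 0$, and $\nabla_\theta H(y,p) = -\tfrac1{p_c}\nabla_\theta p_c = -K\,\nabla_\theta p_c$, which is nonzero because aligning $z$ more closely with the support representations of class $c$ strictly increases $p_c$; therefore $\norm{\nabla_\theta H(y,p)} > 0$.

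The step I expect to be the main obstacle is upgrading this to the claimed non-stationarity of the full \putalg objective~\eqref{eq:objective} (or~\eqref{eq:multicrop_objective}): one must rule out that the nonzero gradient above is exactly cancelled by the gradients of the remaining terms --- this is precisely the ``non-cancellation'' point that was only asserted informally after Proposition~\ref{prop:collapse}. I would handle it by checking that every other term contributes zero gradient at a fully collapsed configuration. Each view-consistency term $H(\rho(p_j^+), p_j)$ has, at collapse, $p_j = p_j^+ = \tfrac1K\mathbf{1}_K$ with $\rho$ fixing the uniform distribution, so $\nabla_{p_j}H$ is the constant vector $-\mathbf{1}_K$; since $\mathbf{1}_K^\top p_j \equiv 1$ identically in $\theta$, its pullback $\nabla_\theta H = -(\partial p_j/\partial\theta)^\top\mathbf{1}_K$ vanishes. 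Likewise the \textsc{me-max} term $-H(\overline p)$ has $\overline p = \tfrac1K\mathbf{1}_K$, which maximizes entropy, so $\nabla_{\overline p}(-H)$ is again a constant vector, and since $\mathbf{1}_K^\top\overline p \equiv 1$ its pullback vanishes too. Consequently the semi-supervised prediction term is the only nonzero contribution to the gradient of the full objective at collapse, the gradients cannot cancel, and the collapsed configuration is non-stationary.

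Two routine points I would verify but not dwell on: the ``non-degeneracy'' used to pass from $\nabla_{p_c}H \neq 0$ to $\nabla_\theta p_c \neq 0$ is the same mild regularity of $f_\theta$ already implicit in Proposition~\ref{prop:collapse}; and when $y$ is label-smoothed, as in the implementation, it remains a fixed positive distance from $\tfrac1K\mathbf{1}_K$, so $p \neq y$ persists and nothing in the argument changes --- exactly the extension already noted after Proposition~\ref{prop:collapse}.
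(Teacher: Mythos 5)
Your core argument is exactly the paper's: the proof of Proposition~\ref{prop:collapse} carries over verbatim up to the final step (collapse makes all similarity weights equal, so by Assumption~\ref{ass:balanced} the prediction for the labeled view is $\tfrac1K\mathbf{1}_K$), and then the ground-truth one-hot target replaces the sharpened target as the thing that differs from uniform, giving a nonzero cross-entropy gradient. Where you go beyond the paper is the second paragraph: the paper's proof stops at $p\neq p^+$ and simply asserts $\norm{\nabla H(p^+,p)}>0$ and non-stationarity, whereas you explicitly verify that at a fully collapsed configuration every other term of the objective --- the unlabeled consistency terms (whose prediction-gradient is the constant vector $-\mathbf{1}_K$, annihilated by the simplex constraint $\mathbf{1}_K^\top p_j\equiv 1$) and the \textsc{me-max} regularizer (whose gradient at the uniform $\overline p$ is likewise constant) --- contributes zero $\theta$-gradient, so the labeled term cannot be cancelled. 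That non-cancellation check is a genuine strengthening of the written proof and is correct as stated. The one caveat, which you flag yourself, is the passage from a nonzero prediction-level gradient to $\nabla_\theta p_c\neq 0$: ``aligning $z$ with the class-$c$ support representations increases $p_c$'' is a statement about directions in representation space, and turning it into a nonzero parameter gradient requires a mild non-degeneracy of $f_\theta$'s Jacobian at the collapsed point. The paper makes the same implicit assumption (its proof concludes $\norm{\nabla H(p^+,p)}>0$ directly from $p\neq p^+$, which is rigorous at the level of logits or simplex-constrained predictions but not literally for $\nabla_\theta$), so your treatment matches the paper's level of rigor while being more explicit about where the gap sits.
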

\begin{proof}
The proof is identical to that of Proposition~\ref{prop:collapse}, up to the last step.
At which point, letting $z$ correspond to the labeled instance in the mini-batch of images views, we have that the target $p^+$ is not equal to the uniform distribution because it corresponds to the corresponding ground truth class label.
From which it follows that $p\neq p^+$ and $\norm{\nabla H(p^+, p)} > 0$.
\end{proof}

Note that Proposition~\ref{prop:semi-collapse} is \emph{only} presented as a theoretical alternative strategy to prevent collapse, but is not used in our experiments; instead, we always use the sharpened positive view prediction $p^+$ as the target for the anchor view prediction $p$.

\subsection{Entropy Minimization}
A third possible strategy to guarantee non-collapsing representations without using the target sharpening assumption is to add an entropy minimization term~\cite{grandvalet2006entropy} to the loss.
As shown in the proofs for Propositions~\ref{prop:collapse} and~\ref{prop:semi-collapse}, collapsing representations always result in high-entropy predictions $p$.
These high-entropy predictions result in large non-zero gradients due to the entropy minimization term (which as the name implies is minimized when the entropy is low), and so, just as before, collapsing representations are not stationary points of the training dynamics.
While adding an entropy minimization term to the loss is a conceptually simple strategy, target sharpening is arguably even simpler, and, by Proposition~\ref{prop:collapse}, suffices to guarantee non-collapsing representations, so we do not use entropy minimization in our experiments.

\section{Ethical Considerations}
\label{apndx:ethical}
Increasing model and dataset sizes is a proven approach to improving the performance of image recognition models.
Depending on the intended application, more accurate image recognition models may yield substantial social benefits for society; e.g., improving the quality and safety of systems relying on image recognition.
However, as with any engineering problem, there is no free lunch, and one must not stop grappling with the ethical concerns of more computationally expensive training pipelines, such as potentially larger environmental footprints (depending on the compute cluster used for training) and exclusionary ramifications.
Computationally intensive training pipelines may exclude participation from researchers without access to the computational resources needed to conducted such experiments, which in-turn may lead to slower progress in the field.

The proposed method in this work matches the current state-of-the-art in data-efficient image recognition using considerably smaller models and fewer training epochs.
While our method still benefits from wider and deeper architectures, we demonstrate that the performance of smaller models is not yet saturated, and that research targeting improvements on these smaller models may very well translate to larger-scale settings.

However, generally speaking, we caution against conflating increased computational effort with larger models, since we observe that this relationship is not always linear.
For example, when training a ResNet-50 (2$\times$) for 12 hours (100 epochs) on 64 V100 GPUs, we obtain 68\% top-1 accuracy in the 1\% label setting and 77\%  in the 10\% label setting.
Conversely, when training a smaller ResNet-50 for 17 hours (200 epochs) on 64 V100 GPUs, we obtain 66\% top-1 accuracy in the 1\% label setting and 75\% in the 10\% label setting.

\section{Historical Perspective}
\label{apndx:historical}
Constructivist learning theory---developed a near half-century ago by Jean Piaget and built on notions of schemata put forth by Immannuel Kant---has (surprisingly) withstood the test of time.
Constructivism not only revolutionized school curricula in the 20$^{\text{th}}$ century, but remains to this day a crucial element of many teaching philosophies---placing greater emphasis on spontaneous learning through self-regulation and concrete activities, often under the pseudonym of Project-Based Learning in primary and secondary schools, and Lab-Based Instruction in post-secondary institutions.
At the heart of Constructivism is the idea that every individual possesses mental schemata---representations relating to distinct semantic concepts---and that learning occurs through the process of \emph{assimilation and accommodation}.\footnote{The term schema may be familiar to researchers working with relational database systems, where it has become standard jargon referring to the logical structure of a database (in close relation to its original meaning in psychology).}
During assimilation, the mind adapts its representation of new experiences to fit its existing schemata, while during accommodation, the existing schemata are updated to make sense of new experiences.
In short, Constructivism purports that knowledge is ``constructed'' through self-guided exploration, and that mental representations of semantic concepts in sensorimotor observations are learned by conforming new observations to past experiences and vice versa.

It is of particular interest to us to note that one of Piaget's tenets was that sensorimotor development came about the process of optimizing a non-purposive mental objective using assimilation and accommodation.
Non-purposive learning generally refers to the process of learning without working towards any particular purpose or goal.
As such, non-purposive learning is generally concerned with deriving mental models, or schemata, of sensorimotor observations, under which all new observations can be readily explained in terms of past observations.
Clearly, non-purposive learning is closely related to the idea embodied nowadays by task-agnostic self-supervised pre-training, but differs slightly.
Whereas current task-agnostic self-supervised learning approaches predict inputs from inputs in a fully unsupervised manner, non-purposive learning approaches do not preclude the use of semantic information.
To the contrary, semantic information can be used to aid in the construction of sensorimotor schemata; i.e., non-purposive learning can be unsupervised, semi-supervised, weakly-supervised, or fully supervised.
This paper proposes a non-purposive method for semi-supervised learning.

\paragraph{Criticisms of Constructivist Learning Theory.}
Despite the widespread success of Constructivisim, one of the weaknesses of Piagetian theory is its lack of specificity in describing the mechanisms by which assimilation and accommodation occur to produce mental representations of semantic concepts in sensorimotor observations~\cite{boden1978artificial}.
It is perhaps for this reason that Piaget was especially interested in the emerging field of cybernetics (a precursor to artificial intelligence developed in the 40's by Norbert Wiener) and has gone so far as to say that ``Life is essentially auto-regulation,'' and ``cybernetic models are, so far, the only ones throwing any light on the nature of auto-regulatory mechanisms''~\cite{piaget1971biology}.
Piaget advocated for cybernetic models with great aplomb, ``I wish to urge that we make an attempt to use it''~\cite{bruner1961individual}, and may have attempted to use them himself had it not been for his advanced age.
Unfortunately, despite the clear links to cybernetics, the connection to Constructivism did not readily carry over to artificial intelligence (AI) in the 70's due to the largely symbolic nature of AI approaches at the time; e.g., it was not obvious how to represent the near infinite variations of a hand-drawn curve in a single concise representation (i.e., a schema); an issue which is now largely resolved by gradient-based learning and modern neural network architectures.

\section{Change Log}
\begin{tabular}{r l l}
     April 28, 2021 & {\bf [v1]} & Initial preprint.\\[1mm]
     \pbox{20cm}{May 26, 2021\\} & \pbox{20cm}{{\bf [v2]}\\} & \pbox{20cm}{Corrected the 1\% NN numbers (table 1). Added \putalg training with 100\% labels (fig.7). Corrected\\typo in the description of momentum.}\\[3.5mm]
     \pbox{20cm}{July 29, 2021\\} & \pbox{20cm}{{\bf [v3]}\\} & \pbox{20cm}{Added small-batch results (table 7). Added experiment with 1\% labels and 1000 classes (table 4).\\Expanded related work discussion.}
\end{tabular}
\end{document}